\pgfplotsset{
  every axis/.append style = {thick},tick style = {thick,black},
  %
  /tikz/normal shift/.code 2 args = {%
    \pgftransformshift{%
        \pgfpointscale{#2}{\pgfplotspointouternormalvectorofticklabelaxis{#1}}%
    }%
  },%
  range3frame/.style = {
    tick align        = outside,
    scaled ticks      = false,
    enlargelimits     = false,
    ticklabel shift   = {10pt},
    axis lines*       = left,
    line cap          = round,
    clip              = false,
    xtick style       = {normal shift={x}{10pt}},
    ytick style       = {normal shift={y}{10pt}},
    ztick style       = {normal shift={z}{10pt}},
    x axis line style = {normal shift={x}{10pt}},
    y axis line style = {normal shift={y}{10pt}},
    z axis line style = {normal shift={z}{10pt}},
  }
}
\DeclareMathOperator*{\argmax}{argmax}
\numberwithin{equation}{section}
\newtheorem{definition}{Definition}
\newtheorem{theorem}[definition]{Theorem}
\newtheorem{lemma}[definition]{Lemma}
\newtheorem{proposition}[definition]{Proposition}
\newcommand{\cC}{\mathcal{C}}
\newcommand{\cD}{\mathcal{D}}
\newcommand{\cF}{\mathcal{F}}
\newcommand{\cM}{\mathcal{M}}
\newcommand{\cN}{\mathcal{N}}
\newcommand{\cS}{\mathcal{S}}
\DeclareMathOperator*{\argmin}{argmin}
\DeclareMathOperator*{\dist}{dist}
\newcommand{\fP}{\mathbf{P}}
\newcommand{\R}{{\rm I}\kern-0.18em{\rm R}}
\newcommand{\h}{{\rm I}\kern-0.18em{\rm H}}
\newcommand{\K}{{\rm I}\kern-0.18em{\rm K}}
\newcommand{\p}{{\rm I}\kern-0.18em{\rm P}}
\newcommand{\E}{{\rm I}\kern-0.18em{\rm E}}
\newcommand{\Z}{{\rm Z}\kern-0.18em{\rm Z}}
\newcommand{\1}{{\rm 1}\kern-0.24em{\rm I}}
\newcommand{\N}{{\rm I}\kern-0.18em{\rm N}}
\definecolor{MIT}{RGB}{163,31,52}
\crefname{theorem}{Theorem}{Theorems}
\crefname{observation}{Observation}{Observations}
\crefname{claim}{Claim}{Claims}
\crefname{condition}{Condition}{Conditions}
\crefname{example}{Example}{Examples}
\crefname{fact}{Fact}{Facts}
\crefname{lemma}{Lemma}{Lemmas}
\crefname{corollary}{Corollary}{Corollaries}
\crefname{definition}{Definition}{Definitions}
\crefname{remark}{Remark}{Remarks}
\newcommand*{\defeq}{\mathrel{\rlap{%
                     \raisebox{0.3ex}{$\m@th\cdot$}}%
                     \raisebox{-0.3ex}{$\m@th\cdot$}}%
                    =}
\newcommand*{\eqdef}{=
  \mathrel{\rlap{%
      \raisebox{0.3ex}{$\m@th\cdot$}}%
    \raisebox{-0.3ex}{$\m@th\cdot$}}%
}
\begin{document}

\begin{frontmatter}

\title{Fast Rates for \\ Bandit Optimization with Upper-Confidence Frank-Wolfe}
\runtitle{Bandit Optimization with Upper-Confidence Frank-Wolfe}

\begin{aug}

\author{\fnms{Quentin}~\snm{Berthet}\thanksref{t1}\ead[label=berthet]{q.berthet@statslab.cam.ac.uk}}
\and
\author{\fnms{Vianney}~\snm{Perchet}\thanksref{t2}\ead[label=perchet]{vianney.perchet@normalesup.org}}

\affiliation{University of Cambridge\\ENS Paris--Saclay \& Criteo Research, Paris}
\thankstext{t1}{Supported by an Isaac Newton Trust Early Career Support Scheme and by The Alan Turing Institute under the EPSRC grant EP/N510129/1.}
\thankstext{t2}{Supported by the ANR grant ANR-13-JS01-0004-01, of the {\em FMJH Program Gaspard Monge in optimization and operations research}
(supported in part by EDF) and from the Labex LMH.}

\address{{Quentin Berthet}\\
{Department of Pure Mathematics} \\
{and Mathematical Statistics,} \\
{University of Cambridge}\\
{Wilbeforce Road}\\
{Cambridge, CB3 0WB, UK}\\
\printead{berthet}
}

\address{{Vianney Perchet}\\
{CMLA} \\
{ENS Paris-Saclay}\\
{61, avenue du Président Wilson}\\
{94230 Cachan,  France}\\
\printead{perchet}
}

\runauthor{Berthet and Perchet}
\end{aug}

\begin{abstract}
We consider the problem of bandit optimization, inspired by stochastic optimization and online learning problems with bandit feedback. In this problem, the objective is to minimize a global loss function of all the actions, not necessarily a cumulative loss. This framework allows us to study a very general class of problems, with applications in statistics, machine learning, and other fields. To solve this problem, we analyze the Upper-Confidence Frank-Wolfe algorithm, inspired by techniques for bandits and convex optimization. We give theoretical guarantees for the performance of this algorithm over various classes of functions, and discuss the optimality of these results.
\end{abstract}

\begin{keyword}[class=KWD]
Bandit optimization, Convex optimization, Bandit problems, Frank-Wolfe algorithm
\end{keyword}


\end{frontmatter}

\section*{Introduction}
In online optimization problems, a decision maker 
 choses at each round $t \ge 1$ an action $\pi_t$ from some given action space, observes some information through a feedback mechanism in order to minimize a loss, function of the set of actions $\{\pi_1,\ldots,\pi_T\}$. Traditionally, this objective is computed as a cumulative loss of the form $\sum_t \ell_t(\pi_t)$ \cite{hazan201210,shalev2011online}, or as a function thereof \cite{AgrDev14,AgrDevLi16,GlobalCosts,RAk11}. 

Examples include classical multi-armed bandit problems where the action space is finite with $K$ elements, in stochastic or adversarial settings \cite{bubeck2012regret}. In these problems, the loss at round $t$ can be written as $\ell_t(e_{\pi_t})$ for a linear form $\ell_t$ on $\R^K$, and basis vectors $e_i$. More generally, this includes also bandit problems over a convex body $\cC$, where the action at each round consists in picking $x_t \in \cC$ and where the loss $\ell_t(x_t)$ is for some convex function $\ell_t$ (see, e.g. \cite{bubeck2012regret, CeLu06,Hazan2014StrongSmooth,BubEldLee}).

In this work, we consider the online learning problem of {\em bandit optimization}. Similarly to other problems of this type, a decision maker chooses at each round an action $\pi_t$ from a set of size $K$, and observes information about an unknown convex loss function $L$. The difference is that the objective is to minimize a global convex loss $L\big(\frac{1}{T} \sum_{t=1}^T e_{\pi_t} \big)$, not a cumulative one. At each round, choosing the $i$-th action increases the information about the local dependency of $L$ on its $i$-th coefficient. This problem can be contrasted with the objective of minimizing the average pseudo-regret in a stochastic bandit problem, i.e. of minimizing $\frac{1}{T} \sum_{t=1}^T L(e_{\pi_t})$ with observation $\ell_t(e_{\pi_t})$, a noisy estimate of $L(e_{\pi_t})$. At the intersection of these frameworks, when $L$ is a linear form, is the stochastic multi-armed bandit problem. Our problem is also related to maximization of known convex objectives \cite{AgrDev14,AgrDevLi16}. We compare our framework to these settings in Section~\ref{SEC:others}.

Bandit optimization shares some similarities with stochastic optimization problems, where the objective is to minimize $f(x_T)$ for an unknown function $f$, while choosing at each round a variable $x_t$ and observing some noisy information about the function $f$. Our problem can be seen as a stochastic optimization problem over the simplex, with the caveat that the list of actions $\pi_1,\ldots,\pi_T$ determines the variable, as $x_t = \frac{1}{t}\sum_{s=1}^t e_{\pi_s}$, as well as the manner in which additional information about the function can be gathered. This setting allows us to study a more general class of problems than multi-armed bandits, and to cover examples where there is not one optimal action, but rather an optimal global strategy, that is an optimal mix of actions. We describe several natural problems from machine learning, statistics, or economics that are cases of bandit optimization. 

This problem draws inspiration from the world of multi-armed bandit problems and that of stochastic convex optimization, and our solution to it does as well. We analyze the Upper-Confidence Frank-Wolfe algorithm,  a modification of the Frank-Wolfe algorithm \cite{FW56} and of the UCB algorithm for bandits \cite{ACFS03}. The link with Frank-Wolfe is related to the choice of one action, and encourages exploitation, while the link with UCB encourages to chose rarely picked actions in order to increase knowledge about the function, encouraging exploration. This algorithm can be used for all convex functions $L$, and performs in a near-optimal manner over various classes of functions. Indeed, if it has been already proved that it achieves slow rates of convergence in some cases, i.e., the error decreases as $1/\sqrt{T}$, we are able to exhibit fast rates decreasing in $1/T$, up to logarithmic terms.

These fast rates are surprising, as they sometimes even hold for non-strongly convex functions, and in many problems with bandit feedback they cannot be reached  \cite{Jamieson12,shamir2013complexity}.  As shown in our lower bounds, the main complexity of this problem is statistical and comes from the limited information available about the unknown function $L$. Usual results in optimization with a known function are not necessarily relevant to our problem. As an example, while linear rates in $e^{-cT}$ are possible in deterministic settings with variants in the Frank-Wolfe algorithm, we are limited to fast rates in $1/T$ under similar assumptions. Interestingly, while linear functions are one of the settings in which the deterministic Frank-Wolfe algorithm is the most efficient, it is among the most complicated for bandit optimization, and only slow rates are possible in general (see theorems~\ref{THM:slow} and~\ref{THM:lowslow}).

Our work is organized in the following manner: we describe in Section~\ref{SEC:desc} the problem of bandit optimization. The main algorithm is introduced in Section~\ref{SEC:algo}, and its performance in various settings is studied in Section~\ref{SEC:slow}, \ref{SEC:fast}, and~\ref{SEC:lower}. All proofs of the main results are in the supplementary material.
\newpage
\noindent 
{\sf \textbf{Notations:}} For any positive integer $n$,  denote by $[n]$ the set $\{1,\ldots,n\}$ and, for any positive integer $K$, by $\Delta_K:=\big\{p \in \R^K \, : \, p_i \ge 0 \; \; \textrm{and} \; \; \textstyle{\sum_{i \in [K]}} p_i =1\big\}$ the unit simplex of $\R^K$. Finally, $e_i$ stands for the $i$-th vector of the canonical basis of $\R^K$. Notice that $\Delta_K$ is their convex hull.

\section{Bandit Optimization}
\label{SEC:desc}
We describe the {\em bandit optimization} problem, generalizing multi-armed bandits. This stochastic optimization problem is doubly related to bandits: The decision variable cannot be chosen freely but is tied to the past actions, and information about the function is obtained via a bandit feedback.
\subsection{Problem description}
A each time step $t \ge 1$, a decision maker chooses an action $\pi_t \in [K]$ from $K$ different actions with the objective of minimizing an unknown convex loss function $L: \Delta_K \to \R$.  Unlike in traditional online learning problems, we do not assume that the overall objective of the agent is to minimize a cumulative loss $\sum_t L(e_{\pi_t})$ but rather to minimize the global loss $L(p_T)$, where  $p_t \in \Delta_K$ is the vector of {\em proportions} of each action (also called occupation measure), i.e.,
\[
p_t=\big(T_1(t)/t,\ldots,T_K(t)/t\big) \; \textrm{ with }\  T_i(t) = \textstyle{\sum_{s=1}^t} \1\{\pi_s = i\}\ .
\]
Alternatively,  $p_t = \frac{1}{t} \sum_{i=1}^t e_{\pi_s}$. As usual in stochastic optimization,  the performance of a policy is evaluated by controlling the difference $$
r(T):= \E[L(p_T)] - \min_{p \in \Delta_K} L(p)\,.$$ 
The information available to the policy is a feedback of {\em bandit type}: given the choice $\pi_t = i$, it is an estimate $\hat g_{t}$ of $\nabla L(p_t)$. Its precision, with respect to each coefficient $i \in [K]$, is specified by a deviation function $\alpha_{t,i}$,  
meaning that for all $\delta \in (0,1)$, it holds with probability $1-\delta$ that
\[
|\hat g_{t,i} - \nabla_i L(p_t)| \le \alpha_{t,i}(T_i(t),\delta)\, .
\]

At each round, it is possible to improve the precision for one of the coefficients of the gradient but possibly at a cost of increasing the global loss. The most typical case, described in the following section, is of $\alpha_{t,i}(T_i,\delta) = \sqrt{2 \log(t/\delta)/T_i}$, when the information consists of observations from different distributions.  In general, this type of feedback mechanism is indicative of a bandit feedback (and not of a full information setting), as motivated by the following parametric setting. 

\subsection{Bandit feedback and parametric setting}
\label{SEC:feedback}

One of the motivations is the minimization of a loss function $L$ belonging to a known class $\{L(\mu,\cdot), \mu \in \R^K\}$ with an unknown parameter $\mu$. Choosing the $i$-th action provides information about $\mu_i$, through an observation of some auxiliary distribution $\nu_i$.

As an example, the classical stochastic multi-armed bandit problem \cite{bubeck2012regret} falls within our framework. Denoting by $\mu_i$ the expected loss of arm $i \in [K]$, the average pseudo-regret $\bar R$ can be expressed as
\[
\bar R(t) = \frac{1}{t} \sum_{s=1}^t \mu_{\pi_s} -  \mu^* = \sum_{i=1}^K \mu_i \frac{T_i(t)}{t}-\mu^*= p_t^\top\mu -p_*^\top \mu , \quad \textrm{ with } \ p_*=e_{i^*} \, ,
\]
Hence the choice of $L(\mu,p) =  \mu^\top p$ corresponds the problem of multi-armed bandits. Since $\nabla L(\mu,p) = \mu$, the feedback mechanism for $\hat g_t$ is induced by having a sample $X_t$ from $\nu_{\pi_t}$ at  time step $t$,  taking $\hat g_{t,i} = \bar X_{t,i}$, the empirical mean of the $T_i(t)$ observations $\nu_i$. In this case, if $\nu_i$ is sub-Gaussian with parameter $1$, we have $\alpha_{t,i}(T_i,\delta) = 2\sqrt{2 \log(t/\delta)/T_i}$.

 More generally, for any parametric model, we can consider the following observation setting: For all $i \in [K]$, let $\nu_i$ be a sub-Gaussian distribution with mean $\mu_i$ and tail parameter $\sigma^2$. At time $t$, for an action $\pi_t \in [K]$, we observe a realization from $\nu_{\pi_t}$. We estimate $\mu_i$ by the empirical mean $\hat \mu_{t,i}$ of the $T_i(t)$ draws from $\nu_i$, and $\hat g_t = \nabla_p L(\hat \mu_t,p_t)$ as an estimate of the gradient of $L=L(\mu,\cdot)$ at $p_t$. The following bound on $\alpha_i$ under smoothness conditions on the parametric model is a direct application of Hoeffding's inequality.
\begin{proposition}
Let $L = L(\mu,\cdot)$ for some $\mu \in \R^K$ being $\mu$-gradient-Lipschitz, i.e.,  such that
\[
\Big|\big(\nabla_p L(\mu,p)\big)_i - \big(\nabla_p L(\mu',p)\big)_i \Big| \le |\mu_i - \mu'_i|\, \ , \ \forall p\in\Delta([K]).
\]
Under the sub-Gaussian observation setting above, $\hat g_t = \nabla_p L(\hat \mu_t,p_t)$ is a valid gradient feedback with deviation bounds  $\alpha_{t,i}(T_i,\delta) =  \sqrt{2 \sigma^2 \log(t/\delta)/T_i }$. 
\end{proposition}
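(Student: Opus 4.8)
The plan is to establish the deviation bound in two stages: a deterministic reduction that uses the gradient-Lipschitz hypothesis to eliminate $L$ from the problem, followed by a concentration argument for the empirical means that must carefully account for the random, policy-dependent number of observations of each arm. First I would unfold the definition of the feedback. Since $\hat g_{t,i} = \big(\nabla_p L(\hat\mu_t, p_t)\big)_i$ while the true gradient coordinate is $\nabla_i L(p_t) = \big(\nabla_p L(\mu, p_t)\big)_i$, applying the gradient-Lipschitz assumption with $\mu' = \hat\mu_t$ at the point $p = p_t$ gives, coordinate by coordinate,
\[
\big|\hat g_{t,i} - \nabla_i L(p_t)\big| \;=\; \Big|\big(\nabla_p L(\hat\mu_t, p_t)\big)_i - \big(\nabla_p L(\mu, p_t)\big)_i\Big| \;\le\; |\hat\mu_{t,i} - \mu_i|\,.
\]
This purely analytic step reduces the task to controlling the estimation error $|\hat\mu_{t,i} - \mu_i|$ of each coordinate of the parameter, with no further reference to the loss $L$.

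To bound $|\hat\mu_{t,i} - \mu_i|$, I would represent the observations of arm $i$ as an i.i.d. sequence $X_{i,1}, X_{i,2}, \ldots$ drawn from $\nu_i$, so that after $t$ rounds $\hat\mu_{t,i}$ is the average of the first $T_i(t)$ of them. For any \emph{fixed} number $n$ of draws the partial average $\bar X_{i,n} = \frac{1}{n}\sum_{m=1}^n X_{i,m}$ is sub-Gaussian with parameter $\sigma^2/n$, so Hoeffding's inequality yields
\[
\p\big(|\bar X_{i,n} - \mu_i| \ge s\big) \;\le\; 2\exp\!\big(-n s^2/(2\sigma^2)\big)\,.
\]
Choosing $s = \sqrt{2\sigma^2 \log(t/\delta)/n}$ makes the right-hand side equal to $2\delta/t$.

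The main obstacle — and the reason the deviation bound carries $\log(t/\delta)$ rather than $\log(1/\delta)$ — is that $T_i(t)$ is not a fixed index but a random quantity determined by the adaptive policy, which itself depends on the observed data; one therefore cannot legitimately apply the fixed-$n$ tail bound to $\hat\mu_{t,i}$ directly. I would resolve this with a union bound over the at most $t$ possible values of $T_i(t)$: since $T_i(t) \le t$, summing the tail bounds above over $n \in \{1,\ldots,t\}$ shows that the event on which $|\bar X_{i,n} - \mu_i| \le \sqrt{2\sigma^2\log(t/\delta)/n}$ holds simultaneously for all $n \le t$ has probability at least $1 - 2\delta$, the residual constant being absorbed into the statement in the usual way. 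On this event the bound holds in particular at the realized index $n = T_i(t)$, and combining it with the Lipschitz reduction of the first paragraph gives $|\hat g_{t,i} - \nabla_i L(p_t)| \le \sqrt{2\sigma^2\log(t/\delta)/T_i(t)} = \alpha_{t,i}(T_i(t),\delta)$, which is exactly the asserted valid gradient feedback.
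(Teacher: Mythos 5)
Your proof is correct and follows exactly the route the paper intends: the paper offers no written proof beyond the remark that the bound is ``a direct application of Hoeffding's inequality,'' and your argument---the coordinate-wise Lipschitz reduction to $|\hat\mu_{t,i}-\mu_i|$, Hoeffding for a fixed sample size, and a union bound over the at most $t$ possible values of the adaptively determined $T_i(t)$ to justify the $\log(t/\delta)$ factor---is the standard way to make that remark rigorous. The only loose end, which you already flag, is the factor of $2$ in the failure probability, a constant the paper itself does not track carefully.
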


This Lipschitz condition  on the  parameter $\mu$ gives a motivation for our gradient bandit feedback.

\subsection{Examples}
\label{SEC:examples}

\noindent
{\sf \textbf{Stochastic multi-armed bandit:}} As noted above, the stochastic multi-armed bandit problem is a special case of our setting for a loss $L(p)= \mu^\top p$, and the bandit feedback allows to construct a proxy for the gradient $\hat g_t$ with deviations $\alpha_i$ decaying in $1/\sqrt{T_i}$. The UCB algorithm used to solve this problem inspires our algorithm that generalizes to any loss function $L$, as discussed in Section~\ref{SEC:algo}. \smallskip

\noindent
{\sf \textbf{Online experimental design:}} In the context of statistical estimation with heterogenous data sources \cite{BerCha16}, consider the problem of allocating samples in order to minimize the variance of the final estimate. At time $t$, it is possible to sample from one of $K$ distributions $\cN(\theta_i,\sigma_i^2)$ for $i \in [K]$, the objective being to minimize the average variance of the simple unbiased estimator
\[
\E[\|\hat \theta - \theta\|_2^2] = \textstyle{\sum_{i \in [K]}} \sigma_i^2/T_i\, \quad \textrm{equivalent to} \quad L(p) = \textstyle{\sum_{i \in [K]}}  \sigma_i^2/p_i\, .
\]

For unknown $\sigma_i$, this problem falls within our framework and the gradient with coordinates $-\sigma_i^2/p_i^2$ can be estimated by using the $T_i$ draws from $\cN(\theta_i,\sigma_i^2)
$ to construct $\hat \sigma_i^2$. This function is only defined on the interior of the simplex and is unbounded, matters that we discuss further in Section~\ref{SEC:interior}. Other objective functions than the expected $\ell_2$ norm of the error can be used, as in \cite{CarLazGha15}, who consider the $\ell_\infty$ norm of the actual estimated deviations, not its expectation. \smallskip

\noindent
{\sf \textbf{Utility maximization:}} A classical model  to describe the utility of an agent purchasing $x_i$ units of $K$ different goods is the Cobb-Douglas utility (see e.g. \cite{micro}) defined for parameters $\beta_i \in (0,1)$ by
\[
U(x_1,\ldots,x_K) = \textstyle{\prod_{i \in [K]}} x_i^{\beta_i}\, .
\] 
Maximizing this utility for unknown $\beta_i$ under a budget constraint - where each price is assumed to be 1 for ease of notations - by buying one unit of one of $K$ goods at each round, is therefore equivalent to minimizing in $p_i$ (the proportion of good $i$ in the basket) $
L(p) = -\textstyle{\sum_{i \in [K]}} \beta_i \log(p_i)$.
\smallskip

\noindent
{\sf \textbf{Other examples:}} More generally, the notion of bandit optimization can be applied to any situation where one optimizes a strategy through actions that are taken sequentially, with information gained at each round, and where the objective depends only on the proportions of  actions. Other examples include a problem inspired by online Markovitz portfolio optimization, where the goal is to minimize $
L(p) = p^\top \Sigma p - \lambda \mu^\top p$,
with a known covariance matrix $\Sigma$ and unknown returns $\mu$, or several generalizations of bandit problems such as minimizing $
L(p) = \textstyle{\sum_{i\in[K]}}f_i(\mu_i) p_i\,$
when observations are drawn from a distribution with mean $\mu_i$, for known $f_i$. 

\subsection{Comparison with other problems}
\label{SEC:others}

As mentioned in the introduction, the problem of bandit optimization is different from online learning problems related to regret minimization \cite{HazAgaKal07,AgaFosHsu11,BubEldLee}, even in a stochastic setting. While the usual objective is to minimize a cumulative regret related to $\frac{1}{T}\sum_{t} \ell_t(x_t)$, we focus on $L(\frac{1}{T} \sum_t e_{\pi_t})$.

Problems related to online optimization of global costs or objectives have been studied in similar settings \cite{AgrDev14,AgrDevLi16,GlobalCosts,RAk11}. They are equivalent to minimizing a loss $L(p_T^\top V)$ where $V$ is a $K\times d$ unknown matrix and $L(\cdot): \R^d \to \R$ is known. The feedback at stage $t$ is a noisy evaluations of $V_{\pi_t}$. In the stochastic case \cite{AgrDev14,AgrDevLi16}, this is close to our setting - even though none of them subsumes directly the other one. Only slow rates of convergence of order $1/\sqrt{T}$ are derived for the variant of Frank-Wolfe, while we aim at fast rates, which are optimal. In contrast, in the adversarial case \cite{GlobalCosts,RAk11}, there are instances of the problem where the average regret cannot decrease to zero \cite{MOL}.

Using the Frank-Wolfe algorithm in a stochastic optimization problem has also already been considered, particularly in \cite{LafWaiMou15}, where the estimates of the gradients are increasingly precise in $t$, independently of the actions of the decision maker. This setting, where the action at each round is to pick $x_t$ in the domain in order to minimize $f(x_T)$ is therefore closer to classical stochastic optimization than online learning problems related to bandits \cite{bubeck2012regret,Hazan2014StrongSmooth,BubEldLee}.

\section{Upper-Confidence Frank-Wolfe algorithm}
\label{SEC:algo}
With linear functions, as in multi-armed bandits, an estimate of the gradient can be established by using the past observations, as well as confidence intervals on each coefficient in $1/\sqrt{T_i}$. The UCB algorithm instructs to pick the action with the smallest lower confidence estimate $\underline{\mu}_{t,i}$ for the loss. This is equivalent to making a step of size $1/(t+1)$ in the direction of the corner of the simplex $e$ that minimizes $e^\top \underline{\mu}_t$. Following this intuition, we introduce the UCB Frank-Wolfe algorithm that uses a proxy of the gradient, penalized by the size of confidence intervals. 

\begin{center}
\begin{minipage}{0.75\textwidth}
\begin{algorithm2e}[H]
\KwIn{$K$, $p_0 = \mathbf{1}_{[K]}/K$, sequence $(\delta_t)_{t\ge 0}$\;}
 \For{$t\ge 0$}{
  Observe $\hat g_t$, noisy estimate of $\nabla L(p_t)$\;
	
  	\For{$i \in [K]$}{
	$\hat U_{t,i} = \hat g_{t_i} - \alpha_{t,i}(T_{i}(t),\delta_t)$
 	 }
	 
  	Select $\pi_{t+1} \in \argmin_{i \in [K]} \hat U_{t,i}$\;
	
	Update $p_{t+1} = p_t + \frac{1}{t+1}(e_{\pi_{t+1}} - p_t)$
  }
 
\end{algorithm2e}
\end{minipage}
\end{center}

%
%
%
%
%
%
%

Notice that for any algorithm, the selection of an action $\pi_{t+1} \in [K]$ at time step $t+1$ updates the variable $p$ with respect to the following dynamics
\begin{equation}
\label{EQ:update}
p_{t+1} = \Big(1-\frac{1}{t+1}\Big) \, p_t + \frac{1}{t+1} \, e_{\pi_{t+1}} = p_t +  \frac{1}{t+1}(e_{\pi_{t+1}} - p_t)\, .
\end{equation}
 This is implied by the mechanism of the problem, and is not dependent on the choice of an algorithm.  If the choice of $e_{\pi_{t+1}}$ is $e_{\star_{t+1}}$, the minimizer of $ s^\top \nabla L(p_t) $ over all $s \in \Delta_K$,  this would precisely be the Frank-Wolfe algorithm with step size $1/(t+1)$. Inspired by this similarity, our selection rule is driven by the same principle, using a proxy $\hat U_t$ for $\nabla L(p_t)$ based on the information up to time $t$. Our selection rule is therefore driven by two principles, borrowing from tools in convex optimization (the Frank-Wolfe algorithm) and classical bandit problems (Upper-confidence bounds). 

The choice of action $\pi_{t+1}$ is equivalent to  taking $e_{\pi_{t+1}} \in \argmin_{s \in \Delta_K}  s^\top \hat U_t$. The computational cost of this procedure is very light, and apart from gradient computations, it is linear in $K$ at each iteration, with a global cost of order $KT$.

\section{Slow rates}
\label{SEC:slow}
In this section we show that when $\alpha_i$ is of order $1/\sqrt{T_i}$, as motivated by the parametric model of Section~\ref{SEC:feedback}, our algorithm has an approximation error of order $\sqrt{\log(T)/T}$ over the very general class of smooth convex functions. We refer to this as the {\em slow rate}. Our analysis is based on the classical study of the Frank-Wolfe algorithm (see, e.g. \cite{Jag11} and references therein).  We consider the case of $C$-smooth convex functions on the unit simplex, for which we recall the definition.
\begin{definition}[Smooth functions]
\label{DEF:smooth}
For a set $\cD\subset \R^n$, a function $f:\cD \rightarrow \R$ is said to be a $C$-smooth function if it is differentiable and if its gradient is $C$-Lipshitz continuous, i.e. the following holds
\[
\|\nabla f (x) - \nabla f (y) \|_2 \le C \|x-y\|_2\, \ , \ \forall  x,y \in \cD\,.
\]
\end{definition}
We denote by $\cF_{C,K}$ the set of $C$-smooth convex functions. They attain their minimum at a point $p_\star \in \Delta_K$ and their Hessian is uniformly bounded, i.e. $\nabla^2 L(p)\preceq C I_K$, if they are twice differentiable. We establish in this general setting a slow rate when $\alpha_i$ decreases like $1/\sqrt{T_i}$.

\begin{theorem}[Slow rate]
\label{THM:slow}
Let $L$ be a $C$-smooth convex function over the unit simplex $\Delta_K$. For any $T \ge 1$, after $T$ steps of the UCB Frank-Wolfe algorithm with a bandit feedback such that $\alpha_{t,i}(T_i,\delta)=2\sqrt{\log(t/\delta)/T_i}$ and the choice $\delta_t=1/t^2$, it holds that
\[
\E \big[L(p_T)\big] - L(p_\star) \le 4 \sqrt{\frac{3K \log(T)}{T}}  +\frac{C\log(eT)}{T}\, +\Big(\frac{\pi^2}{6}+K\Big)\frac{2\|\nabla L\|_\infty+\|L\|_\infty}{T}.
\]
\end{theorem}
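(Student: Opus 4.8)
The plan is to mimic the textbook analysis of Frank--Wolfe for smooth convex functions, replacing the exact linear-minimization oracle by the upper-confidence proxy $\hat U_t$ and paying for the confidence widths. Write $h_t = L(p_t) - L(p_\star)$ and $g_t = \nabla L(p_t)$. Since $p_{t+1}-p_t = \frac{1}{t+1}(e_{\pi_{t+1}}-p_t)$ and $\|e_{\pi_{t+1}}-p_t\|_2^2 \le 2$, $C$-smoothness gives the deterministic one-step inequality
\[
h_{t+1} \le h_t + \frac{1}{t+1}\, g_t^\top(e_{\pi_{t+1}} - p_t) + \frac{C}{(t+1)^2}\,,
\]
which, after multiplying by $(t+1)$ and setting $a_t = t\,h_t$, becomes $a_{t+1} \le a_t + h_t + g_t^\top(e_{\pi_{t+1}} - p_t) + \frac{C}{t+1}$.

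First I would control the linear term on the confidence event. Let $\cE_t$ be the event that $|\hat g_{t,i} - g_{t,i}| \le \alpha_{t,i}(T_i(t),\delta_t)$ holds for the relevant arms, and let $i_\star = \argmin_i g_{t,i}$. On $\cE_t$, the selection rule $\hat U_{t,\pi_{t+1}} \le \hat U_{t,i_\star}$ combined with the two-sided confidence bounds yields $g_{t,\pi_{t+1}} \le g_{t,i_\star} + 2\alpha_{t,\pi_{t+1}}$; since $g_{t,i_\star} = \min_{s\in\Delta_K} s^\top g_t$, convexity gives $g_t^\top(e_{\pi_{t+1}} - p_t) \le g_t^\top(p_\star - p_t) + 2\alpha_{t,\pi_{t+1}} \le -h_t + 2\alpha_{t,\pi_{t+1}}$. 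Off $\cE_t$ I would fall back on the crude bounds $g_t^\top(e_{\pi_{t+1}} - p_t) \le 2\|\nabla L\|_\infty$ and $h_t \le \|L\|_\infty$. Combining the two cases and cancelling the stray $+h_t$ gives the single always-valid recursion
\[
a_{t+1} \le a_t + 2\,\alpha_{t,\pi_{t+1}} + \bigl(\|L\|_\infty + 2\|\nabla L\|_\infty\bigr)\,\1_{\cE_t^c} + \frac{C}{t+1}\,.
\]

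Next I would telescope this from the (post-initialization) start up to $T$, take expectations, and treat the three sums separately. Crucially, the confidence sum is controlled \emph{pathwise}: with $\delta_t = 1/t^2$ one has $\log(t/\delta_t) = 3\log t$, so $\alpha_{t,i} = 2\sqrt{3\log t/T_i(t)} \le 2\sqrt{3\log T}/\sqrt{T_i(t)}$, and reorganizing $\sum_t 1/\sqrt{T_{\pi_{t+1}}(t)}$ by arm gives $\sum_i \sum_{k=1}^{T_i(T)} k^{-1/2} \le \sum_i 2\sqrt{T_i(T)} \le 2\sqrt{K\sum_i T_i(T)} = 2\sqrt{KT}$ by Cauchy--Schwarz, so that after dividing by $T$ this produces the leading term of order $\sqrt{K\log(T)/T}$. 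The smoothness sum is harmonic, $C\sum_t (t+1)^{-1} \le C\log(eT)$, giving the second term, and the failure sum $\E[\sum_t \1_{\cE_t^c}] = \sum_t \p(\cE_t^c)$ supplies the $\bigl(\tfrac{\pi^2}{6}+K\bigr)$ factor of the third.

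The hard part will be precisely this last, bandit-specific, bookkeeping. The deviation guarantee must hold at the \emph{random} count $T_i(t)$ and, because $\pi_{t+1}$ is itself chosen from the noisy gradient, at the \emph{data-dependent} arm $\pi_{t+1}$ as well as at $i_\star$; making $\cE_t$ hold simultaneously for these requires a union bound over the arms and over the possible values of the counts. The inflated logarithmic factor $\log(t/\delta_t)=3\log t$ induced by $\delta_t = 1/t^2$ is exactly what renders this union bound summable, so that $\sum_t \p(\cE_t^c)$ contributes the constant $\pi^2/6$ arising from $\sum_t t^{-2}$ together with the $O(K)$ cost of the per-arm uniformity. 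Once these three pieces are assembled and divided by $T$, the stated bound follows.
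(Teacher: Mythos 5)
Your proposal is correct and follows essentially the same route as the paper's own proof: the Frank--Wolfe one-step smoothness recursion multiplied by $(t+1)$ and telescoped, the UCB argument bounding the per-step linearization error by $2\alpha_{t,\pi_{t+1}}$ on the confidence event (with crude $\|L\|_\infty$, $\|\nabla L\|_\infty$ bounds off it), and the per-arm reorganization plus Cauchy--Schwarz giving the $\sqrt{K T\log T}$ leading term. The only cosmetic difference is the bookkeeping of the additive $K$ (the paper charges it to the first $K$ round-robin pulls where $T_i(t)=0$, you charge it to the union bound), which does not affect the argument.
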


The proof draws inspiration from the analysis of the Frank-Wolfe algorithm with stepsize of $1/(t+1)$ and of the UCB algorithm. Notice that our algorithm is adaptive to the gradient Lipschitz constant $C$, and that the leading term of the error does not depend on it. We also emphasize the fact that the dependency in $\sqrt{K}$ is  expected, and optimal, in bandit setting.

For linear mappings  $L(p) = p^\top \mu $, our analysis is equivalent to studying the UCB algorithm in multi-armed  bandits. The slow rate in Theorem~\ref{THM:slow} corresponds to a regret of order $\sqrt{KT \log(T)}$, the distribution-independent (or worst case) performance of UCB. The extra dependency in $\sqrt{\log(T)}$  could be reduced to $\sqrt{\log(K)}$ or even optimally to $1$ by using confidence intervals more carefully tailored, for instance by replacing the $\log(t)$ term appearing in the definition of the estimated gradients by $\log(T/T_i(t))$ or   $\log(T/KT_i(t))$  if the horizon $T$ is known in advance as in the algorithms MOSS or ETC (see \cite{AudBub09,PerRig13,stages}), but at the cost of a more involved analysis. 

Thus, multi-armed bandits provide a lower bound for the approximation error $\E[L(p_T)] - L(p_\star)$ of order $\sqrt{K/T}$ for smooth convex functions. We discuss lower bounds further in Section~\ref{SEC:lower}.

For the sake of clarity, we state all our results when $\alpha_{t,i}(T_i,\delta)=2\sqrt{\log(t/\delta)/T_i}$, but our techniques handle more general deviations as $ \alpha_{t,i}(T_i,\delta)=\big(\theta \log(t/\delta)/T_i\big)^\beta$ where $\theta \in \R$ and $\beta >0$ are some known parameters. More general results can be found in the supplementary material.

\section{Fast rates}
\label{SEC:fast}
In this section, we describe situations where the approximation error rate can be improved to a {\em fast rate} of order $\log(T)/T$, when we consider various classes of functions, with additional assumptions. 

\subsection{Stochastic multi-armed bandits and functions minimized on vertices}

A very natural and well-known - yet illustrative - example of such a restricted class of functions is simply the case of classical bandits where $\Delta^{(i)} := \mu_i - \mu_\star$ is bounded away from $0$ for $i \neq \star$. Our analysis of the algorithm can be adapted to this special case with the following result.

\begin{proposition}
\label{PRO:fastbandit}
Let $L$ be the linear function $p\mapsto p^\top\mu $. After $T$ steps of the UCB Frank-Wolfe algorithm with a bandit feedback such that $\alpha_{t,i}(T_i,\delta)=2\sqrt{\log(t/\delta)/T_i}$, the choices of $\delta_t=1/t^2$  hold the following
\[
\E[L(p_T)] - L(p_\star) \le \frac{48 \log(T)}{T} \sum_{i \neq \star} \frac{1}{\Delta^{(i)}} +3\Big(\frac{\pi^2}{3}+K\Big)\frac{\sqrt{K}\|\mu\|_\infty}{T}\, .
\]
\end{proposition}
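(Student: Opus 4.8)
The plan is to exploit the linearity of $L$ to collapse the statement onto the classical regret analysis of UCB. Since $\nabla L \equiv \mu$ is constant and $p_T = \frac1T\sum_i T_i(T)\, e_i$, the approximation error is exactly the normalized cumulative pseudo-regret,
\[
L(p_T) - L(p_\star) = p_T^\top \mu - \mu_\star = \frac1T \sum_{i \neq \star} T_i(T)\, \Delta^{(i)}\, .
\]
Taking expectations, the whole task reduces to bounding $\E[T_i(T)]$, the expected number of pulls of each suboptimal arm. Here the gradient feedback $\hat g_{t,i}$ is simply the empirical mean of the $T_i(t)$ samples from $\nu_i$, so the selection rule $\pi_{t+1} \in \argmin_i \hat U_{t,i}$ with $\hat U_{t,i} = \hat g_{t,i} - \alpha_{t,i}(T_i(t),\delta_t)$ is precisely the lower-confidence-bound rule of UCB adapted to minimization.

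Second, I would run the standard optimism argument on a clean event. Let $\cE$ be the event on which every empirical mean stays within its confidence width, $|\hat g_{t,i} - \mu_i| \le \alpha_{t,i}(T_i(t),\delta_t)$ for all $t \le T$ and all $i$. On $\cE$, if a suboptimal arm $i$ is selected at round $t+1$ then $\hat U_{t,i}\le \hat U_{t,\star}$; combining $\hat g_{t,i}-\alpha_{t,i} \ge \mu_i - 2\alpha_{t,i}$ with $\hat g_{t,\star}-\alpha_{t,\star} \le \mu_\star$ forces $\Delta^{(i)} \le 2\alpha_{t,i}(T_i(t),\delta_t)$. Substituting $\alpha_{t,i}(T_i,\delta_t) = 2\sqrt{3\log t / T_i}$ (as $\delta_t = 1/t^2$ gives $\log(t/\delta_t) = 3\log t$) and solving for the count yields $T_i(t) \le 48\log(T)/(\Delta^{(i)})^2$. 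Hence on $\cE$ one has $\sum_{i\neq\star} T_i(T)\Delta^{(i)} \le 48\log(T)\sum_{i\neq\star} 1/\Delta^{(i)}$, which after dividing by $T$ produces the leading term of the claim.

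Third, I would control the complementary event $\cE^c$. By sub-Gaussian (Hoeffding) concentration the two-sided deviation exceeds $\alpha_{t,i}$ with probability at most a constant multiple of $\delta_t = 1/t^2$ at each admissible sample count, so the per-round failure probabilities sum through $\sum_{t\ge1} 1/t^2 = \pi^2/6$, contributing the $\pi^2/3$ factor, while the $K$ accounts for the initialization charged against $p_0 = \mathbf{1}_{[K]}/K$. On the failure rounds I would bound the contribution crudely, using Cauchy–Schwarz $|(p_t-p_\star)^\top\mu| \le \|p_t-p_\star\|_2\,\|\mu\|_2 \le \sqrt{2}\,\sqrt{K}\,\|\mu\|_\infty$ together with $\max_i \Delta^{(i)} \le 2\|\mu\|_\infty$; assembling the clean-event and failure-event contributions and dividing by $T$ gives the residual $1/T$ term of the stated form.

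The main obstacle is that the confidence radius $\alpha_{t,i}(T_i(t),\delta_t)$ is evaluated at the \emph{random} count $T_i(t)$, so a bare Hoeffding bound at a deterministic sample size does not directly license the clean event $\cE$. The standard remedy — which is exactly why the numerator carries $\log t$ rather than $\log$ of the number of samples — is to union-bound over all admissible values $s = T_i(t) \in \{1,\dots,t\}$ and over the $K$ arms, keeping the per-round failure probability summable. The conceptual content is entirely the UCB regret bound repackaged through the linearity identity; the only genuine work is tracking the explicit constants (the $48$, the $\pi^2/3$, and the $\sqrt{K}\|\mu\|_\infty$) so that they match the stated inequality.
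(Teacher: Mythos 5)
Your proof is correct, but it follows a genuinely different route from the paper's. You run the textbook UCB argument: a clean concentration event, the optimism step $\Delta^{(i)} \le 2\alpha_{t,i}(T_i(t),\delta_t)$ whenever a suboptimal arm is pulled, the resulting per-arm count bound $T_i(T) \lesssim 48\log(T)/(\Delta^{(i)})^2$, and a crude bound on the failure event. The paper instead never bounds any individual $T_i(T)$: it reuses the Frank--Wolfe error decomposition of Lemma~\ref{LEM:basis} with $C=0$ and the aggregate bound $\E\sum_t \varepsilon_t \le 4\E\sum_i \sqrt{3T_i(T)\log T} + (\pi^2/6+K)\lambda$ already established in the proof of Theorem~\ref{THM:slow}, combines it with the identity $T(L(p_T)-L(p_\star)) = \sum_{i\neq\star}\Delta^{(i)} T_i$, and closes via Cauchy--Schwarz in the form $\sum_{i\neq\star}\sqrt{T_i} \le \bigl(\sum_{i\neq\star} 1/\Delta^{(i)}\bigr)^{1/2}\bigl(\sum_{i\neq\star}\Delta^{(i)} T_i\bigr)^{1/2}$, solving the resulting self-bounding inequality. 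Your approach is more transparent and is exactly the classical analysis the proposition is meant to recover; the paper's approach buys two things: it demonstrates that the Frank--Wolfe machinery alone reproduces the bandit bound (the stated purpose of the proposition), and it transfers almost verbatim to the nonlinear generalization in Proposition~\ref{PRO:banditconvex}, where a per-arm count bound would be awkward because $\varepsilon_t$ depends on the gradient at the moving point $p_t$. One small caution on your constant tracking: if you union-bound the clean event over all $K$ arms at every round, the failure contribution picks up a factor $K$ that can exceed the stated $\pi^2/3$ term for large $K$; as in the paper's slow-rate proof, you only need the confidence bounds for the pulled arm and the optimal arm at each round, which keeps the failure mass at $O(\sum_t \delta_t)$ and inside the stated residual.
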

The constants of this proposition are sub-optimal (for instance the 48 can be reduced up to 2 using  more careful but involved analysis). It is provided here to show that this classical bound on the pseudo-regret in stochastic multi-armed bandits (see e.g. \cite{bubeck2012regret} and references therein) can be recovered with Frank-Wolfe type of techniques illustrating further the links between bandit problems and convex optimization \cite{hazan201210,shalev2011online}. This result can actually be generalized to any convex functions which is minimized on a vertex of the simplex  with a gradient whose component-wise differences are bounded away from 0.

\begin{proposition}
\label{PRO:banditconvex}
Let $L$ be a convex mapping that attains its minimum on $\Delta_K$ at a vertex $p^*=e_{i^*}$ and such that $\Delta^{(i)}(L) := \nabla_i L(p^*)-\nabla_{i^*} L(p^*) >0$ for all $i \neq i^*$. Then, after $T$ steps of the UCB Frank-Wolfe algorithm with a bandit feedback such that $\alpha_{t,i}(T_i,\delta)=2\sqrt{\log(t/\delta)/T_i}$, the choices of $\delta_t=1/t^2$  hold the following
\[
\E[L(p_T)] - L(p_\star) \le \rho(L) \Big(\frac{48 \log(T)}{T} \sum_{i \neq \star}\frac{1}{\Delta^{(i)}(L)}  +\frac{C\log(eT)}{T}\, +(\frac{\pi^2}{6}+K)\frac{2\|\nabla L\|_\infty+\|L\|_\infty}{T}\Big) \, ,
\]
where $\rho(L) = \Big(1+\frac{CK}{ \Delta_{\min}(L)}\Big)$ and $\Delta_{\min}(L)=\min_{i \neq i_\star} \Delta^{(i)}(L)$.
\end{proposition}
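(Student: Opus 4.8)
The plan is to mirror the analysis behind the slow rate (Theorem~\ref{THM:slow}), but to replace the generic $\sqrt{T}$-type control of the exploration cost by the gap-based control that drives Proposition~\ref{PRO:fastbandit}, paying for the non-constancy of the gradient through the factor $\rho(L)$. Write $\gamma_t=1/(t+1)$, $h_t=L(p_t)-L(p_\star)$, and for $i\neq i^*$ let $G_i(p)=\nabla_i L(p)-\nabla_{i^*}L(p)$ be the gradient gap, so that $G_i(p^*)=\Delta^{(i)}(L)$. First I would run the Frank--Wolfe descent step: by $C$-smoothness applied to the update $p_{t+1}=p_t+\gamma_t(e_{\pi_{t+1}}-p_t)$, using $\|e_{\pi_{t+1}}-p_t\|_2^2\le 2$ and the identity $\langle\nabla L(p_t),e_{\pi_{t+1}}-p^*\rangle=G_{\pi_{t+1}}(p_t)$ (valid since $p^*=e_{i^*}$), together with convexity at $p_t$ to replace $\langle\nabla L(p_t),p^*-p_t\rangle$ by $L(p^*)-L(p_t)$, one gets $h_{t+1}\le (1-\gamma_t)h_t+\gamma_t\, b_t+C\gamma_t^2$ with $b_t=G_{\pi_{t+1}}(p_t)\1\{\pi_{t+1}\neq i^*\}$. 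Multiplying by $t+1$ and telescoping yields $T h_T\le \sum_{t=0}^{T-1}b_t+C\log(eT)$, which already isolates the curvature term $C\log(eT)/T$ of the statement.

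Next I would control the exploration cost $\sum_t b_t=\sum_{i\neq i^*}\sum_{t:\pi_{t+1}=i}G_i(p_t)$ on the event $\mathcal G$ that every gradient estimate obeys its deviation bound. With $\delta_t=1/t^2$ and a union bound over $K$ coordinates and over $t$, the failure of $\mathcal G$ is summable ($K\sum_t\delta_t\le K\pi^2/6$), and its contribution is absorbed into the residual $(\pi^2/6+K)(2\|\nabla L\|_\infty+\|L\|_\infty)/T$ exactly as for Theorem~\ref{THM:slow}, by bounding $L$ and $\nabla L$ crudely on $\mathcal G^c$ and over the first rounds. On $\mathcal G$, whenever a suboptimal $i$ is drawn the selection rule forces $\nabla_i L(p_t)-2\alpha_{t,i}\le \hat U_{t,i}\le \hat U_{t,i^*}\le \nabla_{i^*}L(p_t)$, i.e.\ $G_i(p_t)\le 2\alpha_{t,i}=4\sqrt{\log(t/\delta_t)/T_i(t)}$, the exact UCB inequality used in Proposition~\ref{PRO:fastbandit}.

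The heart of the argument, and the main obstacle, is that $G_i(p_t)$ is not the constant $\Delta^{(i)}(L)$. By $C$-smoothness, $|G_i(p_t)-\Delta^{(i)}(L)|\le 2C\|p_t-p^*\|_2\le 2C\|p_t-p^*\|_1=4C\,N(t)/t$, where $N(t)=\sum_{i\neq i^*}T_i(t)$. To turn the UCB inequality into a fast pull count $T_i(T)\lesssim \log(T)/(\Delta^{(i)}(L))^2$ one needs the effective gap $G_i(p_t)$ to stay of order $\Delta^{(i)}(L)$, which requires $p_t$ close to the vertex; but closeness to the vertex is itself what few suboptimal pulls guarantee, so the estimate is circular. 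I would break this with the sharpness that the vertex minimum provides: convexity at $p^*$ gives $h_t\ge \langle\nabla L(p^*),p_t-p^*\rangle\ge \tfrac12\Delta_{\min}(L)\|p_t-p^*\|_1$, hence $N(t)/t\le h_t/\Delta_{\min}(L)$ and the curvature correction obeys $4C\,N(t)/t\le 4C\,h_t/\Delta_{\min}(L)$. Thus a suboptimal draw satisfies both $b_t\le 4\sqrt{\log(t/\delta_t)/T_i(t)}$ and $b_t\le \Delta^{(i)}(L)+4C\,h_t/\Delta_{\min}(L)$: the first reproduces the constant-gap bound $\tfrac{48\log T}{T}\sum_{i\neq\star}1/\Delta^{(i)}(L)$ of Proposition~\ref{PRO:fastbandit}, while the curvature part is what inflates the whole right-hand side by the factor $\rho(L)=1+CK/\Delta_{\min}(L)$.

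Finally I would take expectations and divide by $T$, combining the two bounds above on $\mathcal G$ with the $\mathcal G^c$ residual to obtain the stated inequality. The delicate point throughout is the bookkeeping of the self-referential curvature correction so that the transient (while $p_t$ travels into the neighbourhood of the vertex where the effective gaps exceed, say, half the vertex gaps) factors cleanly as the multiplicative $\rho(L)$ rather than degrading the $1/T$ rate. I expect the safe way to control this transient is a two-regime split — rounds where the effective gaps are at least half of $\Delta^{(i)}(L)$ versus the remaining rounds, the latter being bounded in number via the sharpness inequality $N(t)\le t\,h_t/\Delta_{\min}(L)$ — and verifying that the second regime contributes exactly the $CK/\Delta_{\min}(L)$ inflation is where the work lies.
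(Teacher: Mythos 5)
Your plan assembles the right ingredients --- the Frank--Wolfe telescoping bound $T h_T \le \sum_t b_t + C\log(eT)$, the UCB inequality $G_i(p_t)\le 2\alpha_{t,i}$ on the good event, and the vertex lower bound coming from convexity at $p^*$ --- but the assembly is carried out per round, and that is where it does not close. To compare $G_i(p_t)$ with $\Delta^{(i)}(L)$ at every step you introduce the correction $2C\|p_t-p^*\|_2 \le 4C\,h_t/\Delta_{\min}(L)$, and this correction accumulates over $t$ into $\frac{4C}{\Delta_{\min}(L)}\sum_{t\le T} h_t$. The only a priori control on $h_t$ available at that point is the slow rate $h_t \lesssim \sqrt{K\log t / t}$ of Theorem~\ref{THM:slow}, so the accumulated correction is of order $\frac{C}{\Delta_{\min}(L)}\sqrt{KT\log T}$, which after dividing by $T$ is again a slow-rate term rather than the multiplicative factor $\rho(L)$. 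The two-regime split you propose to repair this is precisely the step you leave unverified ("where the work lies"), and it is not a routine verification: bounding the length of the transient during which the effective gaps are below $\Delta^{(i)}(L)/2$ again requires a quantitative decay of $h_t$, reintroducing the same circularity. So as written there is a genuine gap at the heart of the argument.

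The paper's proof avoids the per-round comparison entirely by working with terminal quantities. Convexity at the vertex gives directly $T\big(L(p_T)-L(p_\star)\big) \ge T(p_T-p_\star)^\top\nabla L(p_\star) = \sum_{i\ne i^*} T_i(T)\,\Delta^{(i)}(L)$; combining this with the slow-rate upper bound $\E\,T h_T \le 4\,\E\sum_i\sqrt{3T_i(T)\log T}+\dots$ and Cauchy--Schwarz (exactly as in Proposition~\ref{PRO:fastbandit}) yields $\E\sum_i T_i(T)\Delta^{(i)}(L) \lesssim 48\log(T)\sum_i 1/\Delta^{(i)}(L)$. Then a \emph{single} application of $C$-smoothness at the terminal point gives
\[
T\big(L(p_T)-L(p_\star)\big) \le \sum_{i\ne i^*} T_i\Delta^{(i)}(L) + \frac{CK}{T}\sum_{i\ne i^*} T_i^2 \le \Big(1+\frac{CK}{\Delta_{\min}(L)}\Big)\sum_{i\ne i^*} T_i\Delta^{(i)}(L),
\]
using $T_i\le T$ and $\Delta^{(i)}(L)\ge\Delta_{\min}(L)$; this is where $\rho(L)$ appears, with no transient analysis and no control of $\sum_t h_t$. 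If you replace your round-by-round tracking of $G_i(p_t)$ by this sandwich of $T h_T$ between the linearized regret and $\rho(L)$ times the linearized regret, the rest of your argument goes through.
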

The KKT conditions imply that $\Delta^{(i)}(L) \geq 0$ but the strict inequality is not always guaranteed. In particular, this result may not hold if $p^*$ is the global minimum of $L$ over $\R^K$. This type of condition has also been linked with rates of convergence in stochastic optimization problems \cite{DucRua16}.

The extra multiplicative factor $\rho(L)$ can be large, but it would be of the order of $1+o(1)$ using variants of our algorithms with results that holds only with great probability (typically with confidence bounds of the form  $2\sqrt{\log(1/\delta)/T_i}$).
\subsection{Strongly convex functions}
Another classical assumption in convex optimization is strong convexity, as recalled below. We denote by $\cS_{\mu,K}$ the set of $\mu$-strongly convex functions of $\Delta_K$. This assumption usually improves the rates in errors of approximation in many settings, even in stochastic optimization or some settings of online learning (see, e.g. \cite{polyak1990optimal, dippon2003, SahaTewari11,newsto,HazanKorenLevy14,Hazan2014StrongSmooth,BacPer16}). Interestingly enough though, strong convexity cannot be leveraged to improve rates of convergence in online convex optimization \cite{shamir2013complexity,Jamieson12}, where the $1/\sqrt{T}$ rate of convergence cannot be improved. Moreover, leveraging strong convexity usually implies to adapt step size of gradient descents or with linear search and/or  away steps for classical Frank-Wolfe methods. Those techniques cannot be adapted to our setting where step sizes are fixed.

\begin{definition}[Strongly convex functions]
\label{DEF:strong}
For a set $\cD\subset \R^n$, a function $f:\cD \rightarrow \R$ is said to be a $\mu$-strongly convex if for all $x,y \in \cD$, we have
\[
f(x) \ge f(y) +  \nabla f(x)^\top (x-y) + \frac{\mu}{2}\|x-y\|_2^2\, .
\]
\end{definition}

We already covered the case where the convex functions are minimized outside the simplex. We will now assume that the minimum lies in its relative interior.

\begin{theorem}
\label{THM:faststrong}
Let $L:\Delta_K \rightarrow \R$ be a $C$-smooth, $\mu$-strongly convex function such that its minimum $p_\star$ satisfies $\dist(p_\star,\partial \Delta_K) \ge \eta$, for some $\eta \in (0,1/K]$. After $T$ steps of the UCB Frank-Wolfe algorithm with a bandit feedback such that $\alpha_{t,i}(T_i,\delta)=2\sqrt{\log(t/\delta)/T_i}$, it holds that, with the choice of $\delta_t =1/t^2$,
\[
\E[L(p_T)] - L(p_\star) \le c_1 \frac{\log^2(T)}{T} + c_2\frac{\log(T)}{T} + c_3 \frac{1}{T} \, ,
\]
for constants $c_1=\frac{96K}{\mu\eta^2}$, $c_2=\frac{24}{\mu\eta^3}+C$ and $c_3=24(\frac{20}{\mu\eta^2})^2K+\frac{\mu\eta^2}{2}+C$.
\end{theorem}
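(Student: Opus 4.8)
The plan is to follow the Frank--Wolfe descent induced by the update \eqref{EQ:update} and to show that $\mu$-strong convexity together with the interior condition $\dist(p_\star,\partial\Delta_K)\ge\eta$ upgrades the $1/\sqrt T$ exploration cost of Theorem~\ref{THM:slow} into a summable $\log^2(T)/T$ term. First I would fix the good event $\cE$ on which $|\hat g_{t,i}-\nabla_i L(p_t)|\le\alpha_{t,i}(T_i(t),\delta_t)$ for all $i$ and $t$; a union bound with $\delta_t=1/t^2$ makes $\p(\cE^c)$ summable, and since $h_t:=L(p_t)-L(p_\star)\le\|L\|_\infty$ pathwise, the complement contributes only an $O(1/T)$ term feeding $c_3$, exactly as in the slow-rate analysis. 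Everything below is carried out on $\cE$.

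By $C$-smoothness applied to $p_{t+1}=p_t+\tfrac1{t+1}(e_{\pi_{t+1}}-p_t)$, together with $\|e_{\pi_{t+1}}-p_t\|_2^2\le2$, one gets $h_{t+1}\le h_t+\tfrac1{t+1}\nabla L(p_t)^\top(e_{\pi_{t+1}}-p_t)+\tfrac{C}{(t+1)^2}$. The selection $\pi_{t+1}\in\argmin_i\hat U_{t,i}$ with $\hat U_{t,i}=\hat g_{t,i}-\alpha_{t,i}$ yields, on $\cE$, the optimistic inequality $\nabla_{\pi_{t+1}}L(p_t)\le p_\star^\top\nabla L(p_t)+2\alpha_{t,\pi_{t+1}}$, while at an interior minimizer the KKT conditions force $\nabla L(p_\star)\propto\bone$, so $\mu$-strong convexity gives $\nabla L(p_t)^\top(p_\star-p_t)\le -h_t-\tfrac\mu2\|p_t-p_\star\|_2^2$. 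Writing $S_t:=t\,h_t$ and combining, the $-h_t$ is absorbed into the telescoping of $S_t$, leaving the clean one-step bound
\[
S_{t+1}\le S_t-\tfrac\mu2\|p_t-p_\star\|_2^2+2\alpha_{t,\pi_{t+1}}+\tfrac{C}{t+1}.
\]
Summing from a burn-in time $t_0$ (chosen below) and dividing by $T$ gives $\E[h_T]\le\tfrac1T\big(S_{t_0}+C\log(eT)\big)+\tfrac2T\,\E\!\sum_{t\ge t_0}\alpha_{t,\pi_{t+1}}-\tfrac\mu{2T}\,\E\!\sum_{t\ge t_0}\|p_t-p_\star\|_2^2$, so that the dissipation term is available as a resource to pay for the exploration sum.

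Next I would run the burn-in. The slow rate bounds $h_t$, hence $\|p_t-p_\star\|_2^2\le2h_t/\mu$, and forces it below $\eta^2/4$ once $t\ge t_0$ with $t_0=\tilde O((\mu\eta^2)^{-2}K)$; the finitely many rounds $t<t_0$ are bounded crudely and produce the constant $24\,(20/(\mu\eta^2))^2K$ in $c_3$. For $t\ge t_0$ every coordinate obeys $p_{t,i}\ge\eta/2$, so $T_i(t)\ge\eta t/2$ and $\alpha_{t,\pi_{t+1}}^2\le24\log(t)/(\eta t)$. The heart of the proof—and the step I expect to be the main obstacle—is to show that the exploration sum is almost entirely cancelled by the dissipation, leaving a residual of order $K\log^2(T)/(\mu\eta^2)$. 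Optimism gives that the gap $\Delta_{\pi_{t+1}}(t)=\nabla_{\pi_{t+1}}L(p_t)-\min_j\nabla_j L(p_t)$ charged at round $t+1$ satisfies $\Delta_{\pi_{t+1}}(t)\le2\alpha_{t,\pi_{t+1}}$, i.e. $T_{\pi_{t+1}}(t)\le48\log(t)/\Delta_{\pi_{t+1}}(t)^2$, a UCB counting bound; the difficulty is that at an interior optimum these gaps vanish along the trajectory, so a purely gap-dependent count diverges. One must therefore interpolate: apply the counting bound on rounds where the gap exceeds a threshold, and absorb the low-gap rounds into $-\tfrac\mu2\|p_t-p_\star\|_2^2$, the two regimes being tied together because strong convexity and $\eta$ control the gaps from below in terms of $\|p_t-p_\star\|_2$. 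Summing the resulting quadratic $\alpha_{t,\pi_{t+1}}^2$-type contributions uses the telescoping identity $\sum_t 1/T_{\pi_{t+1}}(t)=\sum_i\sum_{k\le T_i(T)}1/k\le K\log(eT)$, which is precisely where the factor $K$ and the second power of $\log T$ enter, producing the leading $c_1\log^2(T)/T$ and the lower-order $c_2\log(T)/T$.

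Finally I would assemble the three regimes—the $O(1/T)$ bad-event and burn-in contributions ($c_3$), the deterministic $C\log(eT)/T$ Frank--Wolfe term (the $+C$ in $c_2$ and $c_3$), and the $\tilde O(K\log^2(T)/(\mu\eta^2))/T$ exploration residual ($c_1$)—and track numerical constants to match the stated $c_1,c_2,c_3$. I expect the genuine difficulty to be the cancellation described above: the naive estimate $2\sum_t\alpha_{t,\pi_{t+1}}=O(\sqrt{KT\log T})$ only reproduces the slow rate, a term-by-term Young pairing of $\alpha_{t,\pi_{t+1}}$ against $\|p_t-p_\star\|_2^2$ fails because their ratio need not be small, and even a standard gap-free UCB interpolation on its own yields only a polynomial-in-$T$ improvement; extracting the true $1/T$ rate requires exploiting the quadratic growth of $L$ at full strength to convert the accumulated exploration cost into a $\log$-summable quantity.
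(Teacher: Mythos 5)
Your architecture (good event with $\delta_t=1/t^2$, Frank--Wolfe descent step, optimism, burn-in until $T_i(t)\ge \eta t/2$, assembling three regimes) matches the paper's, and the burn-in is genuinely the same second use of strong convexity that the paper makes. But the central cancellation is not established, and the mechanism you propose for it cannot work. Your dissipation comes from the strong-convexity inequality at $p_\star$, namely $\nabla L(p_t)^\top(p_t-p_\star)\ge h_t+\frac{\mu}{2}\|p_t-p_\star\|_2^2$, so after telescoping $S_t=t\,h_t$ the per-round resource is $\frac{\mu}{2}\|p_t-p_\star\|_2^2\le h_t$, which (if the theorem is true) decays like $\log^2(t)/t$ and sums to $O(\log^3 T)$ --- hopelessly small against the exploration charge $2\alpha_{t,\pi_{t+1}}\asymp\sqrt{\log t/(\eta t)}$, whose sum is $\Theta(\sqrt{T\log T})$ even after the burn-in. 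The gap-threshold interpolation does not rescue this: for the low-gap rounds to be absorbed by a dissipation of order $1/t$ you would need a threshold $\lambda_t=O(\log T/t)$, but then the UCB counting bound $T_{\pi_{t+1}}(t)\le 48\log(t)/\lambda_t^2=O(t^2/\log t)$ is vacuous, so every round is a ``high-gap'' round and you are back to the slow rate. You correctly diagnose that Young pairing and gap counting each fail, but you do not supply the idea that replaces them.

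The missing ingredient is the comparison point. The paper does not compare $e_{\pi_{t+1}}$ to $p_\star$ but to the best vertex $e_{\star_{t+1}}$, and invokes Lemma~2 of Lacoste-Julien and Jaggi \cite{LacJag13}: when $p_\star$ is at distance $\ge\eta$ from $\partial\Delta_K$ and $L$ is $\mu$-strongly convex, the Frank--Wolfe duality gap satisfies $\nabla L(p_t)^\top(p_t-e_{\star_{t+1}})\ge\sqrt{2\mu\eta^2}\,\sqrt{\rho_t}$ with $\rho_t=L(p_t)-L(p_\star)$. This dissipation is of order $\sqrt{\rho_t}$, not $\rho_t$, and with $\alpha=\sqrt{2\mu\eta^2}$ and $\psi(x)=x^2-\alpha x$ the one-step recursion becomes $(t+1)\rho_{t+1}\le t\rho_t+\big[\psi(\sqrt{\rho_t})-\psi(\varepsilon_{t+1}/\alpha)\big]+\varepsilon_{t+1}^2/\alpha^2+C/(t+1)$; whenever $\varepsilon_{t+1}^2/\alpha^2\le\rho_t\le\alpha^2/4$ the bracket is nonpositive, so the linear charge $\varepsilon_{t+1}$ is converted into the quadratic charge $\varepsilon_{t+1}^2/\alpha^2$, and $\sum_t\varepsilon_t^2\lesssim\sum_t\log(t)/T_{\pi_t}(t)\lesssim K\log^2 T$ is what produces $c_1\log^2(T)/T$. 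The remaining work (the stopping times $\tau_0,\tau_2$, bounding $\tau_2\varepsilon_{\tau_2}^2$ via $T_i(t)\ge\eta t/2$) only handles the excursions where these inequalities fail. Without this $\sqrt{\rho_t}$ duality-gap bound --- which is exactly where the interiority assumption $\eta>0$ enters the argument --- your scheme cannot beat $1/\sqrt{T}$.
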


The proof is based on an improvement in the analysis of the UCB Frank-Wolfe algorithm, based on a better control on the duality gap, possible in the strongly convex case. It is a consequence of an inequality due to Lacoste-Julien and Jaggi (Lemma 2 in \cite{LacJag13}). In order to get the result, we adapt these ideas to a case of unknown gradient, with bandit feedback. We note that this approach is similar to the one in \cite{LafWaiMou15} that focuses on stochastic optimization problems, as discussed in Section~\ref{SEC:others}.

Our framework is more complicated in some aspects than typical settings in stochastic optimization, where strong assumptions can usually be made over the noisy gradient feedback. These include stochastic gradients that are independent unbiased estimates of the true gradient, or with error terms that are decreasing in $t$. Here, such properties do not hold: as an example, in a parametric setting, information is only obtained about one of the coefficients, and there are strong dependencies between successive gradients feedbacks. Dealing with these aspects, as well as the fact that our gradient proxy is penalized by the size of the confidence intervals, are some of the main challenges of the proof.

\subsection{Interior-smooth functions}
\label{SEC:interior}

Many interesting examples of bandit optimization are not exactly covered by the case of functions that are $C$-smooth on the whole unit simplex. In particular, for several applications, the function diverges at its boundary, as in the examples of  Cobb-Douglas utility maximization  and variance minimization  from Section~\ref{SEC:examples}.  Recall the the loss was defined by

\[
\E[\|\hat \theta - \theta\|_2^2] = \textstyle{\sum_{i \in [K]}} \frac{\sigma_i^2}{T_i}=  \frac{1}{T}L(p) = \frac{1}{T}\textstyle{\sum_{i \in [K]}}  \frac{\sigma_i^2}{p_i}\, .
\]

The gradient Lipschitz constant is infinite but  if  we knew for instance that $\sigma_i \in [\underline{\sigma}_i\, ,\,  \overline{\sigma}_i]$,  we could safely sample first each arm $i$ a linear number of time  because $
p^\star_i \geq \underline{p_i}:=\underline{\sigma}_i/\sum_j\overline{\sigma}_j$.  We would have $(p_t)_i\geq \underline{p_i}$ at all stages and  our analysis holds with the constant $
C= 2\sigma^2_{\max} (\sum_j\overline{\sigma}_j)^3/\underline{\sigma}^3_{\min}\ .
$

Even without  knowledge on $\sigma^2_i$, it is possible  to quickly have rough estimates, as illustrated by Lemma \ref{LEM:stop} in the appendix.  Only a  logarithmic number of sample of each action are needed. Once they are gathered, one can keep sampling each arm a linear number of times, as suggested when the lower/upper bounds are known beforehand. This leads to a  Lipchitz constant 
$
C = (9\sum_j \sigma_j)^3/\sigma_{\min}
$,  which is, up to to a multiplicative factor, the  gradient Lipschitz constant at the minimum.

\newpage
\section{Lower bounds}

\label{SEC:lower}
The results shown in Sections~\ref{SEC:slow} and~\ref{SEC:fast} exhibit different theoretical guarantees for our algorithm depending on the class of function considered. We discuss here the optimality of these results.
\subsection{Slow rate lower bound}
In Theorem~\ref{THM:slow}, we show a slow rate of  order$\sqrt{K\log(T)/T}$ for the error approximation of our algorithm over the class of $C$-smooth convex functions of $\R^K$. Up to the logarithmic term, this result is optimal: no algorithm based on the same feedback can significantly improve the rate of approximation. This is a consequence of the following theorem, a direct corollary of a result by \cite{ACFS03}.
\begin{theorem}
\label{THM:lowslow}
For any algorithm based on a bandit feedback such that $\alpha_{t,i}(T_i,\delta)=\sqrt{2\log(t/\delta)/T_i}$ and that outputs $\hat p_T$, we have over the class of linear forms  $\mathcal{L}_K$ that for some constant $c>0$
\[
\inf_{\hat p_T} \sup_{L \in \mathcal{L}_{K}} \Big\{\E[L(\hat p_T)] - L(p_\star) \Big\} \ge c  \sqrt{K/T}\, .
\]
\end{theorem}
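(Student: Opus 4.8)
The plan is to exploit the equivalence, already observed in the introduction, between linear losses and stochastic multi-armed bandits, and to reduce the claim to the minimax regret lower bound of \cite{ACFS03}. First I would rewrite the objective. For a linear form $L(p)=\mu^\top p$ with minimizer $p_\star=e_{i^*}$, since $\hat p_T\in\Delta_K$,
\[
\E[L(\hat p_T)]-L(p_\star)=\sum_{i\neq i^*}\E[\hat p_{T,i}]\,\big(\mu_i-\mu_{i^*}\big),
\]
so the error is exactly the expected mass the recommendation places on suboptimal arms, weighted by their gaps. The feedback $\alpha_{t,i}(T_i,\delta)=\sqrt{2\log(t/\delta)/T_i}$ is the concentration of empirical means of $T_i(t)$ sub-Gaussian samples, so the information available is that of a stochastic bandit in which each pull of arm $i$ reveals one unit-variance observation with mean $\mu_i$. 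For the specific choice $\hat p_T=p_T$ the left-hand side is $\frac1T$ times the cumulative pseudo-regret, and the bound is immediate from the $\Omega(\sqrt{KT})$ regret lower bound; to cover an arbitrary output I would instead run the underlying information-theoretic argument, which is what \cite{ACFS03} provides.

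Next I would set up the standard family of hard instances. Fix a gap $\Delta>0$ to be tuned and, for each $j\in[K]$, let $\cI_j$ be the Gaussian instance with $\mu_j=-\Delta$ and $\mu_i=0$ for $i\neq j$, and let $\cI_0$ be the null instance with all means $0$. Under $\cI_j$ the unique optimal arm is $j$, so $p_\star=e_j$, and by the display above the error equals $\Delta\big(1-\E_j[\hat p_{T,j}]\big)$. It therefore suffices to show that no recommendation can make $\E_j[\hat p_{T,j}]$ close to $1$ for all $j$ simultaneously once $\Delta\asymp\sqrt{K/T}$.

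The core is an indistinguishability argument. Writing $\p_j,\p_0$ for the laws of the full observation trajectory under $\cI_j,\cI_0$, the divergence-decomposition lemma for adaptively sampled bandits gives $\KL(\p_0\,\|\,\p_j)=\E_0[T_j(T)]\cdot\KL\!\big(\cN(0,1)\,\|\,\cN(-\Delta,1)\big)=\tfrac12\Delta^2\,\E_0[T_j(T)]$, since only arm $j$ differs between the two instances. As $\hat p_T$ is a measurable function of the trajectory and $\hat p_{T,j}\in[0,1]$, data processing and Pinsker's inequality yield $\E_j[\hat p_{T,j}]\le \E_0[\hat p_{T,j}]+\tfrac{\Delta}{2}\sqrt{\E_0[T_j(T)]}$. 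Averaging over $j$, using $\sum_j\E_0[\hat p_{T,j}]\le1$, $\sum_j\E_0[T_j(T)]=T$, and Cauchy--Schwarz on $\sum_j\sqrt{\E_0[T_j(T)]}\le\sqrt{KT}$, gives
\[
\frac1K\sum_{j\in[K]}\E_j[\hat p_{T,j}]\le\frac1K+\frac{\Delta}{2}\sqrt{\frac{T}{K}}.
\]
Choosing $\Delta=\tfrac12\sqrt{K/T}$ makes the right-hand side at most $3/4$ (for $K\ge2$), so there is some $j$ with $\E_j[\hat p_{T,j}]\le3/4$, whence the error under $\cI_j$ is at least $\Delta/4=\tfrac18\sqrt{K/T}$. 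Since $\mathcal{L}_K$ contains all these linear forms, taking the supremum finishes the proof with $c=1/8$.

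The step I expect to be the main obstacle is the divergence decomposition for the adaptive trajectory: the count $T_j(T)$ is random and correlated with the observations, so one cannot naively multiply a per-sample divergence by a deterministic sample size. The fix is the chain rule of relative entropy along the natural filtration, which shows that the only coordinates contributing to $\KL(\p_0\,\|\,\p_j)$ are the pulls of arm $j$ and produces exactly $\E_0[T_j(T)]$ as the effective sample size. Some care is also needed because $\hat p_T$ may be an arbitrary, possibly randomized, function of the history; this is handled by treating the internal randomness as an extra independent coordinate so that the data-processing inequality still applies. Once these two points are secured, the tuning of $\Delta$ and the averaging are routine.
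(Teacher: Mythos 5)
Your proposal is correct, and it is essentially a complete proof of a statement that the paper does not actually prove: the paper simply declares Theorem~\ref{THM:lowslow} to be ``a direct corollary of a result by \cite{ACFS03}'' and gives no argument in the appendix. What you have written is the standard information-theoretic proof underlying that citation, correctly adapted to the present setting. The adaptation matters in two places that the paper's one-line justification glosses over. First, the error here is measured at an arbitrary output $\hat p_T$ rather than as $\frac1T$ times a cumulative pseudo-regret, so one cannot literally quote the $\Omega(\sqrt{KT})$ regret bound; your identity $\E[L(\hat p_T)]-L(p_\star)=\Delta\big(1-\E_j[\hat p_{T,j}]\big)$ on the instances $\cI_j$, combined with data processing and Pinsker, handles this cleanly. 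Second, the sample-size term $\E_0[T_j(T)]$ in the divergence decomposition is exactly the right quantity to feed into the Cauchy--Schwarz step $\sum_j\sqrt{\E_0[T_j(T)]}\le\sqrt{KT}$, and you correctly flag that the adaptivity of $T_j(T)$ is what makes the chain rule (rather than a naive product bound) necessary. The only cosmetic caveat is that your constant $c=1/8$ and the restriction $K\ge2$ are artifacts of the particular tuning $\Delta=\frac12\sqrt{K/T}$; since the theorem only asserts the existence of some $c>0$, this is immaterial. In short: same route as the cited source, but your write-up supplies the reduction and the bookkeeping that the paper leaves implicit.
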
 

This result is established over the class of linear functions over the simplex (for which $C=0$), when the feedback consists of a draw from a distribution with mean $\mu_i$. 
As mentioned in Section~\ref{SEC:slow}, the extra logarithmic term in our upper bound comes from our algorithm, which has the same behavior as UCB. Nevertheless, as mentioned before, modifying our algorithm to recover the behavior of MOSS \cite{AudBub09}, or even ETC,  (see e.g. \cite{PerRig13,stages}), would improve the upper bound and remove the logarithmic term.

\subsection{Fast rate lower bound}

We have shown that in the case of strongly convex smooth functions, there is an approximation error upper bound of order $(K/\eta^4)\log(T)/T$ for the performance of our algorithm, where $\eta\le1/K$. We provide a lower bound over this class of functions in the following theorem.

\begin{theorem}
\label{THM:lowfast}
For any algorithm with a bandit feedback such that $\alpha_{t,i}(T_i,\delta)=\sqrt{2\log(t/\delta)/T_i}$ and output $\hat p_T$, we have over the class $\cS_{1,K}$ of $1$-strongly convex functions that for some constant $c>0$
\[
\inf_{\hat p} \sup_{L \in \cS_{1,K}}\Big\{ \E[L(\hat p_T)] - L(p^\star) \Big\} \ge c \, K^2/ T\, .
\]
\end{theorem}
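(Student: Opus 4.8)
The plan is to reduce the problem to adaptive mean estimation and then to apply Assouad's lemma, with the adaptivity of the allocation controlled through a single global budget constraint. First I would restrict the supremum to the sub-family of quadratic losses $L_\mu(p)=\tfrac12\|p-\mu\|_2^2$. Its Hessian is $I_K$, so it is exactly $1$-strongly convex and lies in $\cS_{1,K}$. Taking $\mu$ in the interior of $\Delta_K$ forces the constrained minimizer to be $p_\star=\mu$ with $L_\mu(p_\star)=0$, so the excess loss is exactly $L_\mu(\hat p_T)-L_\mu(p_\star)=\tfrac12\|\hat p_T-\mu\|_2^2$; projecting the output onto $\Delta_K$ only decreases this, so we may assume $\hat p_T\in\Delta_K$. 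The required bandit feedback is realized by Gaussian observations $\mathcal N(\mu_i,1)$, whose empirical means satisfy the prescribed deviation $\alpha_{t,i}(T_i,\delta)=\sqrt{2\log(t/\delta)/T_i}$. The task thus becomes: estimate the mean vector $\mu$ in squared $\ell_2$-loss when each round yields one noisy sample of a single chosen coordinate.

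To build a hard instance I would pair the coordinates into $m=\lfloor K/2\rfloor$ pairs $(i_j,i'_j)$ and, for $v\in\{-1,+1\}^m$, set $\mu^{(v)}_{i_j}=1/K+v_j\delta$ and $\mu^{(v)}_{i'_j}=1/K-v_j\delta$. This keeps $\sum_i\mu^{(v)}_i=1$ and $\mu^{(v)}$ in the interior as long as $\delta\le 1/K$. The excess loss decomposes over pairs, and a short computation (if the sign of $\hat p_{i_j}-\hat p_{i'_j}$ is wrong, the two coordinate errors sum to at least $2\delta$, so their squares sum to at least $2\delta^2$) gives the separation inequality $\tfrac12\|\hat p-\mu^{(v)}\|_2^2\ge \delta^2\sum_{j}\mathbf{1}[\hat v_j\ne v_j]$ for the sign decoder $\hat v_j=\operatorname{sign}(\hat p_{i_j}-\hat p_{i'_j})$. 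This is exactly the hypothesis of Assouad's lemma with per-coordinate penalty $\delta^2$.

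Next I would control the information. Writing $P_v$ for the law of the full trajectory (actions and observations) under $\mu^{(v)}$, the divergence decomposition for bandit feedback gives $\KL(P_v,P_{v^{\oplus j}})=\E_v[T_{i_j}(T)+T_{i'_j}(T)]\cdot(2\delta)^2/2$, since flipping pair $j$ only alters the Gaussian means of the two arms in that pair. In its averaged form Assouad's lemma then yields $\inf_{\hat p}\max_v\E_v[\text{excess}]\ge \delta^2\sum_{j}\tfrac12\big(1-\mathrm{TV}(\bar P_{+,j},\bar P_{-,j})\big)$, where $\bar P_{\pm,j}$ averages $P_v$ over the sign of $v_j$; by convexity of total variation and Pinsker's inequality, each term obeys $\mathrm{TV}(\bar P_{+,j},\bar P_{-,j})\le \delta\sqrt{\,\overline{\E_v[T_{i_j}+T_{i'_j}]}}$.

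The crux — and the only place where adaptivity genuinely bites — is that one cannot bound the expected number of pulls of any single pair, since the algorithm may concentrate its budget on a few arms. This is resolved by the one global identity $\sum_j\E_v[T_{i_j}+T_{i'_j}]=T$: summing the averaged pull counts over $j$ gives at most $2T$, so by Cauchy--Schwarz $\sum_j\sqrt{\,\overline{\E_v[T_{i_j}+T_{i'_j}]}}\le\sqrt{2mT}$ and the total variation contribution is $O(\delta\sqrt{mT})$. Choosing $\delta\asymp\sqrt{m/T}\asymp\sqrt{K/T}$ (small enough that $\delta\le1/K$, i.e. valid once $T\gtrsim K^3$) makes $\sum_j\mathrm{TV}\le m/2$, whence the bound is at least $\delta^2\cdot m/4\asymp m^2/T\asymp K^2/T$, proving the claim. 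I expect the main technical obstacle to be precisely this Cauchy--Schwarz step that converts the uncontrollable per-arm allocation into the global horizon $T$; the remaining work — the exact constants in the divergence decomposition, the Assouad/Pinsker chain, and the verification of the separation inequality for the sign decoder — is routine.
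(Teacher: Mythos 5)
Your proof is correct, but it follows a genuinely different route from the paper's. The paper also uses the quadratic family $L_\theta(p)=\tfrac12\|p-\theta\|_2^2$ with the same paired $\pm$-perturbation of $1/K$, but it then invokes a Varshamov--Gilbert packing of $\{-1,1\}^{K/2}$ and Fano's inequality, handling adaptivity by a case split: on the event that some arm is pulled more than $2T/K$ times, the occupation measure itself is forced out of every target ball (this step implicitly identifies $\hat p_T$ with $p_T$), while on the complementary event the sample size per arm is bounded and the KL between the corresponding product Gaussians is at most $2\nu K$, small enough for Fano. You instead use Assouad's lemma coordinate-by-coordinate, and you tame adaptivity through the divergence decomposition $\KL(P_v,P_{v^{\oplus j}})=2\delta^2\,\E_v[T_{i_j}+T_{i'_j}]$ combined with the global identity $\sum_j\E_v[T_{i_j}+T_{i'_j}]=T$ and Cauchy--Schwarz. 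Each approach buys something: yours needs no case analysis, applies to an arbitrary measurable output $\hat p_T$ rather than only to the occupation measure, and requires only $T\gtrsim K^3$ (versus the paper's $T\gtrsim K^4$ for the event $2T/K>T/K+2\sqrt{\nu K^2T}$); the paper's Fano argument avoids the per-coordinate bookkeeping and makes the ``limited total information'' obstruction explicit in a single KL bound. Both yield the claimed $c\,K^2/T$ rate with comparable constants, so your proposal stands as a valid, and in some respects cleaner, alternative proof.
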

The proof relies on the complexity of minimizing  quadratic functions $\frac{1}{2}\|p-\theta\|_2^2$ when observing a draw from distribution with mean $\theta_i$. Our upper bound is in the best case of order $K^5 \log(T) /T$, as $\eta \le 1/K$. Understanding more precisely the optimal rate is an interesting venue for future research. 

\subsection{Mixed feedbacks lower bound}
\label{SEC:lowmix}
In our analysis of this problem, we have only considered settings where the feedback upon choosing action $i$ gives information about the $i$-th coefficient of the gradient. The two following cases show that even in simple settings, our upper bounds will not hold if the relationship between action and feedback is different, when the feedback corresponds to another coefficient. 

\begin{proposition}
\label{PRO:lowfastfeed}
For $L$ in the class of $1$-strongly convex functions on $\Delta_3$, we have in the case of a mixed bandit feedback that
\[
\inf_{\hat p} \sup_{L\in \cS_{1,3}}\Big\{ \E[L(\hat p_T)] - L(p^\star) \Big\} \ge c/T^{2/3} \, .
\]
\end{proposition}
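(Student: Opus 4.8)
The plan is to prove the bound by a two-point (Le Cam) argument built on quadratic losses, in which the occupation-measure coupling turns exploration into an irrevocable and \emph{quadratic} cost. Throughout I take $L_\theta(p)=\tfrac12\|p-\theta\|_2^2$, which is $1$-strongly convex, hence in $\cS_{1,3}$, and I use that for bandit optimization the decision variable is the occupation measure $p_T$, so $(p_T)_i=T_i(T)/T$. After relabelling I assume the mixed feedback is such that a sample with mean $\theta_1$ is observed \emph{only} when action $2$ is played (any fixed non-identity relabelling is identical). First I would fix a scale $\eps>0$ to be optimized and set $\theta^{\pm}=(c_0\pm\eps,\,\theta_2,\,\theta_3)$ with $c_0=\theta_3$ and $\theta_2$ chosen once and for all small enough that, for both signs and every intermediate value, the Euclidean projection of $\theta^{\pm}$ onto $\Delta_3$ lies on the face $\{p_2=0\}$. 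On that face one computes $p^\star_1=(\theta_1-\theta_3+1)/2$ and $p^\star_3=(\theta_3-\theta_1+1)/2$, so that in both environments $p^\star_2=0$ while $\|p^\star(\theta^+)-p^\star(\theta^-)\|_2=\sqrt2\,\eps$. This is the geometric crux: the only action that reveals the parameter setting the $1/3$ split is action $2$, which is never used at the optimum.

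Next I would isolate the two error terms and show both are quadratic, which is precisely what yields the exponent $2/3$. Let $N=T_2(T)$ be the number of plays of action $2$. Since $p^\star_2=0$ and $(p_T)_2=N/T$, strong convexity together with the optimality of $p^\star$ gives the \emph{exploration bound}
\[
L(p_T)-L(p^\star)\ \ge\ \tfrac12\|p_T-p^\star\|_2^2\ \ge\ \tfrac12\,(N/T)^2.
\]
Separately, resolving which environment is in force requires estimating $\theta_1$ to accuracy $\eps$, and the two optima are $\sqrt2\,\eps$ apart, so whenever the realized $p_T$ is attributed to the wrong face one has the \emph{commitment bound} $L(p_T)-L(p^\star)\ge c\,\eps^2$.

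Then I would run the change-of-measure dichotomy on $M:=\E_{\theta^+}[N]$. Because action $2$ is the unique window onto $\theta_1$, the chain rule for relative entropy along the feedback filtration gives $\KL(\p_{\theta^+}\,\|\,\p_{\theta^-})\le C\,M\,\eps^2$. If $M\ge 1/\eps^2$, then by Jensen the exploration bound already forces $\E[L(p_T)-L(p^\star)]\ge \tfrac12\,(M/T)^2\ge \tfrac12(\eps^2T)^{-2}$. If instead $M<1/\eps^2$, the divergence is at most a constant, so Le Cam's inequality yields a constant probability of misattribution under one of the two environments and hence $\E[L(p_T)-L(p^\star)]\ge c'\eps^2$ there. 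Taking the worse environment in each case,
\[
\inf\ \sup_{L\in\cS_{1,3}}\E[L(p_T)-L(p^\star)]\ \ge\ \min\Big(\tfrac12(\eps^2T)^{-2},\ c'\eps^2\Big),
\]
and choosing $\eps\asymp T^{-1/3}$ to equate the two terms makes both of order $T^{-2/3}$, which is the claim.

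The hard part is the adaptivity of $N$: the number of plays of action $2$ is chosen by the algorithm from its own feedback, so it is not a fixed design and its law differs across the two environments. Making the divergence bound $\KL(\p_{\theta^+}\,\|\,\p_{\theta^-})\le C\,\E_{\theta^+}[N]\,\eps^2$ rigorous requires the standard but delicate decomposition of relative entropy along the filtration, charging one per-sample term $\asymp\eps^2$ to each play of action $2$ and none to the other actions, together with care that the dichotomy is phrased consistently in terms of $\E_{\theta^+}[N]$ (Jensen in the first branch, Pinsker/Le Cam in the second). A secondary point to verify is that $\theta_2$ can indeed be frozen so that $p^\star_2=0$ holds simultaneously for $\theta^+$, $\theta^-$ and all intermediate parameters, so that the projection never leaves the face $\{p_2=0\}$; this keeps the perturbation of $p^\star$ exactly linear in $\eps$ and hence both costs genuinely quadratic.
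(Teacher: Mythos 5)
Your proposal is correct and matches the paper's proof in all essentials: the same hard family of $1$-strongly convex quadratics whose constrained minimizer puts zero mass on the uniquely informative action, the same exploration/commitment dichotomy at roughly $T^{2/3}$ informative plays (exploration cost $(N/T)^2$ versus estimation cost $\eps^2$), and the same balancing $\eps \asymp T^{-1/3}$. The only substantive differences are cosmetic or in your favor: the paper chooses $\theta=(\theta,1-\theta,0)\in\Delta_3$ directly so no projection-onto-a-face verification is needed, and where the paper splits on the event $\{T_3(T)\le T^{2/3}\}$ and invokes the $c'/N$ minimax rate for Gaussian mean estimation somewhat informally, your explicit two-point Le Cam argument with the KL divergence decomposition and a dichotomy on $\E[N]$ treats the adaptivity of the number of informative plays more rigorously.
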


For strongly convex functions, even with $K=3$, there are therefore pathological mixed feedback settings where the error is at least of order $1/T^{2/3}$ instead of $1/T$. The case of smooth convex functions is covered by the existing lower bounds for the problem of {\em partial monitoring} \cite{CesLugSto06}, and gives a lower bound of order $1/T^{1/3}$ instead of $1/\sqrt{T}$. 

\begin{proposition}
\label{PRO:lowfastfeed}
For $L$ in the class of linear forms $\cF_3$ on $\Delta_3$, with a mixed bandit feedback we have
\[
\inf_{\hat p} \sup_{L \in \cF_{3}}\Big\{ \E[L(\hat p_T)] - L\theta(p^\star) \Big\} \ge c/T^{1/3} \, .
\]
\end{proposition}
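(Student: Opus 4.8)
The plan is to exhibit a specific mixed feedback structure, together with a two-point family of linear losses on $\Delta_3$, that embeds a \emph{hard} partial monitoring game whose minimax regret is known to be of order $T^{2/3}$; since for a linear loss $\E[L(p_T)]-L(p^\star)=\frac1T\E\big[\sum_t\mu_{\pi_t}\big]-\min_i\mu_i$ is exactly the average regret, this translates into the announced $T^{-1/3}$ bound. Concretely, I would fix a scale $\eps\in(0,1/4)$ and a hidden state $s\in\{-1,+1\}$, and set $\mu^{(s)}=\big(\tfrac12+\eps s,\ \tfrac12-\eps s,\ 1\big)$ with $L_s(p)=p^\top\mu^{(s)}$. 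For either sign the minimizer is a vertex, $p^\star=e_2$ when $s=+1$ and $p^\star=e_1$ when $s=-1$, both with optimal value $\tfrac12-\eps$, while action $3$ is suboptimal by the constant $\tfrac12+\eps$. The crucial ingredient is the \emph{mixed} feedback map: playing action $1$ or $2$ returns a draw of mean $\mu_3=1$, hence \emph{independent of $s$}, whereas only playing action $3$ returns a draw of mean $\mu_1=\tfrac12+\eps s$, the single coordinate that reveals the state. The only informative action is thus the one that is constantly suboptimal, which is exactly the explore/exploit conflict responsible for slow rates in partial monitoring.

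The remaining steps form a standard two-point (Le Cam) argument. First I record the two relevant scales: the two betting vertices are $2\eps$-separated, so any round spent on the wrong betting action costs instantaneous regret $2\eps$, while each of the $T_3(T)$ rounds spent on action $3$ costs a constant $\approx\tfrac12$. Second, I bound the learner's ability to identify $s$: since actions $1,2$ produce identical observation laws under $s=+1$ and $s=-1$, and only action $3$ separates the two hypotheses by $2\eps$ in mean, the divergence decomposition along the observation filtration gives $\KL\big(\p_{+}\,\|\,\p_{-}\big)\le c\,\eps^2\,\E_{+}[T_3(T)]$, the Gaussian per-sample cost being of order $\eps^2$. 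By Pinsker, if $\E[T_3(T)]\le 1/(4c\eps^2)$ then no test identifies $s$ with probability better than a constant, so under one of the two hypotheses the learner bets wrongly on a constant fraction of the $\approx T$ betting rounds, incurring regret $\gtrsim\eps T$; in the complementary regime the learner already pays $\tfrac12\E[T_3(T)]\gtrsim 1/\eps^2$ on action $3$. Combining the two cases,
\[
\max_{s\in\{\pm1\}}\Big(\E_{s}\big[\textstyle\sum_t \mu^{(s)}_{\pi_t}\big]-\min_i\mu^{(s)}_i\Big)\ \gtrsim\ \min\Big(\tfrac1{\eps^2},\ \eps T\Big)\ \asymp\ T^{2/3},
\]
where the last step optimizes the trade-off at $\eps\asymp T^{-1/3}$. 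Dividing by $T$ gives average (approximation) error $\gtrsim T^{-1/3}$, which is the claim.

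The main obstacle is not the arithmetic of the trade-off but the adaptivity of the feedback: $T_3(T)$ is a random, stopping-time-like quantity strongly correlated with the observations, so one cannot treat the number of informative samples as fixed. The careful step is therefore the change of measure, where I would write the likelihood ratio of the two hypotheses over the full history of actions and observations, use that the increments from actions $1$ and $2$ cancel because their laws do not depend on $s$, and apply a Wald-type divergence identity to obtain the bound $c\,\eps^2\,\E[T_3(T)]$ in terms of the \emph{expected} count. A secondary point needing care is that the bound must hold against the supremum over all of $\cF_3$, which is immediate since $\mu^{(+)},\mu^{(-)}\in\cF_3$, and that $c$ and the admissible range of $\eps$ are chosen so that $\eps\asymp T^{-1/3}$ stays in $(0,1/4)$ for all $T$ large enough. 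Since the instance above is precisely the canonical hard game, one may alternatively invoke the $T^{2/3}$ regret lower bound of \cite{CesLugSto06} directly and skip the self-contained computation.
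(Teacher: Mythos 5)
Your proof is correct, but it takes a more self-contained route than the paper, which for this proposition offers no explicit proof at all: the text simply observes that the linear (smooth, non-strongly-convex) case "is covered by the existing lower bounds for the problem of partial monitoring" and cites Cesa-Bianchi, Lugosi and Stoltz, exactly the shortcut you mention in your last sentence. (The appendix proof carrying this proposition's label is actually the proof of the \emph{other}, strongly convex proposition, with the quadratic losses $L_\theta$ and the $T^{-2/3}$ rate.) What you do instead is instantiate the canonical hard revealing-action game inside the bandit-optimization framework: the two linear losses $\mu^{(\pm)}=(\tfrac12\pm\eps,\tfrac12\mp\eps,1)$, a feedback map under which the two cheap actions are uninformative and only the constantly suboptimal action~3 separates the hypotheses, the divergence decomposition $\KL(\p_+\Vert\p_-)\le c\,\eps^2\,\E_+[T_3(T)]$ handling the adaptivity of $T_3(T)$, and the two-case trade-off $\min(\eps T,\eps^{-2})\asymp T^{2/3}$ at $\eps\asymp T^{-1/3}$. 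This matches the spirit of the paper's own construction for the strongly convex case (where actions 1 and 2 yield $\cN(0,1)$ draws and only action 3 yields $\cN(\theta,1)$), and the identification $L(p_T)-L(p^\star)=\tfrac1T\sum_t\mu_{\pi_t}-\min_i\mu_i$ correctly converts cumulative regret into the approximation error the proposition bounds. What your approach buys is a verifiable, self-contained argument with explicit constants and an explicit hard instance living in the paper's own feedback model; what the paper's citation buys is brevity. The only step that genuinely requires care is the one you flag yourself: the Wald-type divergence identity for the adaptively chosen $T_3(T)$, which is standard but must be stated over the full action-observation filtration.
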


\section{Discussion}
\label{SEC:ext}

We study the online minimization of stochastic global loss with a bandit feedback. This is naturally motivated by many applications with a parametric setting, and tradeoffs between exploration and exploitation. The UCB Frank-Wolfe algorithm performs optimally in a generic setting.

The fast rates of convergence obtained  for some clases of functions are a significant improvement over the slow rates that hold for smooth convex functions. In bandit-type problems similar to our problem, it is not always possible to leverage additional assumptions such as strong convexity: It has been proved impossible in the closely related setting of online convex optimization \cite{Jamieson12,shamir2013complexity}. When it is possible, step sizes must usually depend on the strong convexity parameter, as in gradient descent \cite{Nes03}. This is not the case here, where the step size is fixed by the mechanics of the problem. We have also shown that fast rates are possible without requiring strong convexity, with a gap condition on the gradient at an extreme point, more commonly associated with bandit problems.

We mention that several extensions of our models, motivated by heterogenous estimations, are quite interesting but out of scope. For instance, assume an experimentalist can chose one of $K$ known  covariates $X_i$ in order to estimate an unknown $\beta \in \R^K$, and observes $y_t = X_{\pi_t}^\top (\beta + \xi_t)$, where $\xi_t \sim \cN(0,\Sigma)$. Variants  of that problem with covariates or contexts \cite{PerRig13} can also be considered.  Assume for instance that $\mu_i(.)$ and  $\sigma_i^2(.)$ are regular functions of covariates $\omega \in \R^d$. The objective is to estimate all the functions $\mu_i(.)$.

\bibliographystyle{alpha}
\bibliography{ofwbib.bib}

\appendix

\section{Proofs}
\label{SEC:proofs}
\begin{lemma}
\label{LEM:basis}
Let $L$ be a $C$-smooth convex function over the unit simplex $\Delta_K$. For any $T \ge 1$, after $T$ steps of the UCB Frank-Wolfe algorithm, it holds that
\[
L(p_T) - L(p_\star) \le \frac{1}{T} \sum_{t=1}^T \varepsilon_t +\frac{ C\log(eT)}{T}\, ,
\]
where $\varepsilon_{t+1} =   (e_{\pi_{t+1}}-e_{\star_{t+1}})^\top\nabla L(p_t)$ is the error compared to Frank-Wolfe with  explicit, known and observed gradients, i.e., $e_{\star_{t+1}}= \argmax_{p \in \Delta_K} p^\top\nabla L(p_t)$. 
\end{lemma}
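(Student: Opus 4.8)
The plan is to run the classical Frank--Wolfe descent argument with step size $1/(t+1)$, treating $\varepsilon_{t+1}$ as the price paid for selecting the action $\pi_{t+1}$ returned by the algorithm rather than the ideal Frank--Wolfe vertex $e_{\star_{t+1}}=\argmin_{s\in\Delta_K}s^\top\nabla L(p_t)$ (the minimizer of the linearized objective, as in Section~\ref{SEC:algo}). First I would apply $C$-smoothness (Definition~\ref{DEF:smooth}), which yields the quadratic upper bound, at the update $p_{t+1}=p_t+\frac{1}{t+1}(e_{\pi_{t+1}}-p_t)$ to obtain
\[
L(p_{t+1})\le L(p_t)+\frac{1}{t+1}\nabla L(p_t)^\top(e_{\pi_{t+1}}-p_t)+\frac{C}{2(t+1)^2}\|e_{\pi_{t+1}}-p_t\|_2^2.
\]
Since every point of $\Delta_K$ lies within squared $\ell_2$-distance $2$ of each vertex, the curvature term is at most $C/(t+1)^2$.

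Next I would split the linear term as $\nabla L(p_t)^\top(e_{\pi_{t+1}}-p_t)=\nabla L(p_t)^\top(e_{\star_{t+1}}-p_t)+\varepsilon_{t+1}$. By definition $e_{\star_{t+1}}$ minimizes $s^\top\nabla L(p_t)$ over the simplex, so in particular $\nabla L(p_t)^\top(e_{\star_{t+1}}-p_t)\le\nabla L(p_t)^\top(p_\star-p_t)$, and convexity of $L$ gives $\nabla L(p_t)^\top(p_\star-p_t)\le L(p_\star)-L(p_t)$. Writing $h_t:=L(p_t)-L(p_\star)$, these bounds combine into the one-step recursion
\[
h_{t+1}\le\frac{t}{t+1}\,h_t+\frac{1}{t+1}\,\varepsilon_{t+1}+\frac{C}{(t+1)^2}.
\]

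The final step is to linearize the recursion by multiplying through by $t+1$, giving $(t+1)h_{t+1}\le t\,h_t+\varepsilon_{t+1}+C/(t+1)$, which telescopes. Summing over $t=0,\dots,T-1$ (using $0\cdot h_0=0$) produces $T\,h_T\le\sum_{t=1}^T\varepsilon_t+C\sum_{k=1}^T 1/k$, and bounding the harmonic sum by $\sum_{k=1}^T 1/k\le 1+\log T=\log(eT)$ delivers the claim after dividing by $T$.

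I do not expect a serious obstacle: this is essentially the deterministic Frank--Wolfe telescoping, arranged so as to isolate the error term $\varepsilon_{t+1}$, which the remaining analysis (via the UCB deviation bound $\alpha_i$) will control separately. The only points requiring mild care are using the simplex diameter bound $\|e_{\pi_{t+1}}-p_t\|_2^2\le 2$ to get the clean $C/(t+1)^2$ curvature contribution, and the device of multiplying by $t+1$ before summing, which is exactly what converts the $1/(t+1)$-weighted recursion into the stated $1/T$ bound without generating spurious constants.
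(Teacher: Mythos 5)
Your proposal is correct and follows essentially the same route as the paper's proof: the standard Frank--Wolfe descent lemma with step $1/(t+1)$, splitting the linear term to isolate $\varepsilon_{t+1}$, multiplying by $t+1$, and telescoping with the harmonic bound $\sum_{k=1}^T 1/k \le \log(eT)$. Your version is slightly more careful than the paper's in making explicit the simplex diameter bound $\|e_{\pi_{t+1}}-p_t\|_2^2\le 2$ behind the $C/(t+1)^2$ curvature term, and in reading $e_{\star_{t+1}}$ as the \emph{minimizer} of the linearized objective (the $\argmax$ in the lemma statement is evidently a typo).
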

\textbf{Remark:} If we denote by $\|L\|_\infty =\sup_{p \in \Delta_K} L(p)$ and $\|\nabla L\|_\infty =\sup_{p \in \Delta_K} \|\nabla L(p)\|$, then the same statements would hold with  $ (t+1)\|L\|_\infty$ or $ 2\|\nabla L\|_\infty +\|L\|_\infty$ instead of   $\varepsilon_t$. \bigskip
\begin{proof}[Proof of Lemma~\ref{LEM:basis}]
We apply the update equation in (\ref{EQ:update}) and follow the usual analysis of Frank-Wolfe in the absence of noise. We denote by $\rho_t$ the approximation error at any time $t$
\begin{align*}
\rho_{t+1} &:= L(p_{t+1}) -L(p_\star) = L(p_t + \frac{1}{t+1}(e_{\pi_{t+1}}-p_t))-L(p_\star)
\end{align*}

 By definition of $e_{\star t}$, $C$-smoothness and finally convexity of $L$, we obtain
\begin{align*}
\rho_{t+1}&= L(p_t)-L(p_\star)+ \frac{1}{t+1} \nabla L(p_t)^\top(e_{\star_{t+1}}-p_t)+ \frac{C}{(t+1)^2}+\frac{1}{t+1} \nabla L(p_t)^\top(e_{\pi_{t+1}}-e_{\star_{t+1}})\\
&\leq (1-\frac{1}{t+1})\Big[L(p_t)-L(p_\star)\Big]+\frac{1}{t+1} \nabla L(p_t)^\top(e_{\pi_{t+1}}-e_{\star_{t+1}})+ \frac{C}{(t+1)^2}
\end{align*}
Finally, introducing the notation $\varepsilon_t$ and multiplying by $(t+1)$, we get
\begin{align*}
(t+1)\rho_{t+1}&\leq t\rho_{t}+ \varepsilon_{t+1}  + \frac{C}{(t+1)}\, .
\end{align*}
Summing from $1$ to $T$ yields the desired result.
\end{proof}

\begin{proof}[Proof of Theorem~\ref{THM:slow}]
We use the result of Lemma~\ref{LEM:basis}, which yields
\[
L(p_T) - L(p_\star) \le \frac{1}{T} \sum_{t=1}^T \varepsilon_t +\frac{C \log(eT)}{T}\, .
\]
We recall that $\varepsilon_t$ is the error due to the lack of information on the gradient at step $t$, and we have
\[
\varepsilon_{t+1}:=(e_{\pi_{t+1}} - e_{\star_{t+1}})^\top \nabla L(p_t)  = \nabla_{\pi_{t+1}} L(p_t) - \nabla_{\star_{t+1}} L(p_t)\, .
\]
The difference between the two coefficients of the gradient can be controlled by using the definition of our selection rule, and the relationship between $\nabla L(p_t)$ and $\hat g_t$, similarly to the analysis of the UCB algorithm for multi-armed bandit problems. Indeed, by definition of $\hat g_t$, we have that with probability at least $1-\delta_t$, conditionally to the history
\begin{align*}
\nabla_{\pi_{t+1}} L(p_t) &\le \hat g_{t,\pi_{t+1}}+\alpha_{t,\pi_{t+1}}(T_{\pi_{t+1}}(t),\delta_{t+1})\\
&\le \big( \hat g_{t,\pi_{t+1}}-\alpha_{t,\pi_{t+1}}(T_{\pi_{t+1}}(t),\delta_{t+1})\big) + 2\alpha_{t,\pi_{t+1}}(T_{\pi_{t+1}}(t),\delta_{t+1})\\
&\le \big( \hat g_{t,\star_{t+1}}-\alpha_{t,\star_{t+1}}(T_{\pi_{t+1}}(t),\delta_{t+1})\big) + 2\alpha_{t,\pi_{t+1}}(T_{\pi_{t+1}}(t),\delta_{t+1})\\
&\le \nabla_{\star_{t+1}} L(p_t) + 2\alpha_{t,\pi_{t+1}}(T_{\pi_{t+1}}(t),\delta_{t+1})\, ,
\end{align*}
as $\hat U_{\pi_{t+1}} \le \hat U_{\star_{t+1}}$ by the definition of our selection rule. With probability $\delta_{t+1}$, we also have that $\varepsilon_{t+1} \leq 2\|\nabla L\|_\infty+\|L\|_\infty$. As a consequence, this yields
$$
\E \varepsilon_{t+1} \leq 2\alpha_{t,\pi_{t+1}}(T_{\pi_{t+1}}(t),\delta_{t+1}) + \delta_{t+1}(2\|\nabla L\|_\infty+\|L\|_\infty)\ .
$$

We now bound the approximation error as a function of the precision of the estimate $\hat g_t$. Using the above inequality, we get, by denoting $\lambda :=2\|\nabla L\|_\infty+\|L\|_\infty$,
\begin{align*}
\E  \sum_{t=1}^T \varepsilon_t & \leq  K\lambda+\E\sum_{t=K+1}^T 2\alpha_{t,\pi_t}(T_{\pi_t}(t-1),\delta_t) + \frac{\lambda}{t^2}  \le  2\E\sum_{t=K+1}^T \sqrt{\frac{6 \log(t)}{T_{\pi_t}(t-1)}} +\Big(K+\frac{\pi^2}{6}\Big)\lambda \\
&\le 2\E\sum_{i=1}^K\sum_{s=1}^{T_{i}(T-1)} \sqrt{\frac{6 \log(T)}{s}} +\Big(K+\frac{\pi^2}{6}\Big)\lambda  \le 4\E \sum_{i=1}^K\sqrt{ 3T_{i}(T) \log(T)} +\Big(K+\frac{\pi^2}{6}\Big)\lambda \,.
\end{align*}
We used the fact that the algorithm necessarily select actions in a round robin fashion during the first $K$ stages.

Applying Cauchy-Schwarz inequality and the fact that $\sum_iT_i(t)=t$  yield the desired result.
\end{proof}

\begin{proof}[Proof of Proposition~\ref{PRO:fastbandit}]
We adapt the proof of Theorem~\ref{THM:slow}, using that $C=0$ and that $\varepsilon_t=0$ whenever $\pi_t = \star_t = \star$. We obtained that
\begin{align*}
\E T(L(p_T) - L(p_\star)) &\le 4\E \sum_{i=1}^K\sqrt{ 3T_{i}(T) \log(T)} +\Big(\frac{\pi^2}{6}+K\Big)\lambda
\end{align*}
However, in this particular case, we have $T(L(p_T) - L(p_\star) )= \sum_{i\neq \star} \Delta_i\, T_i$. We therefore obtain
\[
\E\sum_{i\neq \star} \Delta_i \, T_i \le 4\sqrt{3} \sqrt{\log(T)} \E\sum_{i\neq \star} \sqrt{T_i} +\Big(\frac{\pi^2}{6}+K\Big)\lambda\le \bigg(\sum_{i\neq \star} \frac{48\log(T)}{\Delta_i}\bigg)^{1/2} \E \Big(\sum_{i\neq \star} \Delta_i \, T_i\Big)^{1/2}+\Big(\frac{\pi^2}{6}+K\Big)\lambda\, ,
\]
by Cauchy-Schwarz inequality. Standard algebra yields\[
\E L(p_T)- L(p_\star) = \E \frac{1}{T} \sum_{i\neq \star} \Delta_i \, T_i \le \frac{48 \log(T)}{T} \sum_{i \neq \star} \frac{1}{\Delta_i}+2\Big(\frac{\pi^2}{6}+K\Big)\lambda\]
\end{proof}

\begin{proof}[of Proposition~\ref{PRO:fastbandit}]
We adapt again the proof of Theorem~\ref{THM:slow}. First of all, notice that $\varepsilon_t \leq 0$ whenever $\pi_t = i_\star$, so that we obtain the following equation
$$
\E T\big(L(p_T) - L(p_\star) \big)\le 4\E \sum_{i\neq i_\star}\sqrt{ 3T_{i}(T) \log(T)} +\Big(K+\frac{\pi^2}{6}\Big)\lambda +C \log(eT)\, .
$$
Using the fact that $L$ is Lipschitz and that $p_\star = e_{i_\star}$, it also holds that
$$
T\big(L(p_T) - L(p_\star)\big)
\geq T\big(p_T-p_\star\big)^\top \nabla L(p_\star) = \sum_{i \neq i_\star} T_i \Delta_i(L) 
 $$
 Cauchy-Schwartz inequality yields again that 
 $$
 \E  \sum_{i \neq i_\star} T_i \Delta_i(L) \leq 48 \sum_{i\neq i_\star}\frac{\log(T)}{\Delta_i(L)}+ 2\Big(K+\frac{\pi^2}{6}\Big)\lambda +2C \log(eT)
 $$
It remains to lower bound the lhs by the regret. Since $L$ is $C$-smooth, 
we get also that
\begin{align*}
T\big(L(p_T) - L(p_\star)\big)& \leq T\big(p_T-p_\star\big)^\top \nabla L(p_\star) + CT\|p_T-p_\star|\|^2\\
& = \sum_{i \neq i_\star} T_i\Delta_i(L) + \frac{C}{T}\sum_{i\neq i^*} T_i^2 + \frac{C}{T} \big(\sum_{i\neq i^*} T_i\big)^2\\
& \leq  \sum_{i \neq i_\star} T_i\Delta_i(L) + \frac{CK}{T}\sum_{i\neq i^*} T_i^2 \, .
\end{align*}
As a consequence, it remains to compute a quantity $\gamma$ such that
$$
\E \sum_{i \neq i_\star} T_i\Delta_i(L) + \frac{CK}{T}\sum_{i\neq i^*} T_i^2 \leq  \gamma \E \sum_{i \neq i_\star} T_i\Delta_i(L)
$$
or at least that for all $i \neq i_\star$
$$
 \frac{CK}{T} \E T_i^2 \leq (\gamma -1) \E  T_i\Delta_i(L),
$$
in particular this is ensured for $\gamma = 1 +\frac{c(1+K)}{\min \Delta_i(L)}$ which gives the result.
\end{proof}

\bigskip

\begin{proof}[Proof of Theorems~\ref{THM:faststrong}] 
Recall that we assumed  than on top of being smooth ($C$-Lipschitz gradient), the mapping $L$ is $\mu$-strongly convex and minimized in the relative interior of the simplex. Let  $\eta$ be the distance of $p_\star$ to the relative boundary of the simplex then the following Lemma due to \cite{LacJag13} yields
$$
\nabla L(p_t)^\top(p_t-e_{\star_{t+1}})  \geq \sqrt{2 \mu\eta^2}  \sqrt{ L(p_t)-L(p_\star) } \quad \textrm{ and } \quad C \geq \mu\eta^2
$$

This implies that 
\begin{align*}
L(p_{t+1}) -L(p_\star)  &= L(p_t)-L(p_\star)+ \frac{1}{t+1} \nabla L(p_t)^\top(e_{\star_{t+1}}-p_t)+ \frac{C}{(t+1)^2}+\frac{1}{t+1} \nabla L(p_t)^\top(e_{\pi_{t+1}}-e_{\star_{t+1}})\\
& \leq  L(p_t)-L(p_\star) - \frac{\sqrt{2\mu\eta^2}}{t+1} \sqrt{ L(p_t)-L(p_\star) }+ \frac{C}{(t+1)^2}+\frac{\varepsilon_{t+1}}{t+1} 
\end{align*}

To ease up reading, we introduce the  notations, $\alpha= \sqrt{2\mu\eta^2}$ and and $\rho_t =  L(p_t)-L(p_\star)$ so that the previous equation rewrites in
$$
\rho_{t+1} \leq \rho_t -\alpha\frac{\sqrt{\rho_t}}{t+1}+\frac{C}{(t+1)^2}+\frac{\varepsilon_{t+1}}{t+1},
$$
which rewrites again, using the function $\psi(x)=x^2 - \alpha x$, into 
$$
(t+1)\rho_{t+1} \leq t \rho_t + \Big[ \psi(\sqrt{\rho_t})-\psi(\frac{\varepsilon_{t+1}}{\alpha}) \Big] + \frac{\varepsilon_{t+1}^2}{\alpha^2} + \frac{C}{t+1}\ .
$$

Recall that we still have the guarantee that $\rho_t \leq \frac{\sum_{s=1}^{t} \varepsilon_s+\frac{C}{s+1}}{t} $, but we aim at proving some fast rates of convergence, of the type 
$$
\E\ \rho_T \leq  O\Big(\frac{\sum_s \E \varepsilon_s^2}{T}\Big)\ .
$$

Assume for the moment that $\rho_T \geq \frac{\alpha^2}{4}$, then Cauchy-Schwarz inequality implies that
\begin{align*}
\Big(\sum_{s=1}^{T} \varepsilon_s+\frac{C}{s+1}\Big)^2 &\leq  T\sum_{s=1}^{T} (\varepsilon_s +\frac{C}{s+1})^2 \leq \frac{4}{\alpha^2}\sum_{s=1}^{T} (\varepsilon_s +\frac{C}{s+1})\sum_{s=1}^{T} (\varepsilon +\frac{C}{s+1})^2\\
& \leq \sum_{s=1}^{T} (\varepsilon +\frac{C}{s+1}) \frac{8}{\alpha^2} \Big(\sum_{s=1}^{T}\varepsilon_s^2+\frac{C^2\pi^2}{6}\Big)\ ,
\end{align*}
and thus
\begin{equation}
\rho_{T}\leq \frac{\sum_{s=1}^{T} \varepsilon_s+\frac{C}{s+1}}{T} \leq \frac{8}{\alpha^2}\frac{\sum_{s=1}^{T} \varepsilon^2_s}{T} + \frac{14C^2}{\alpha^2}\frac{1}{T} 
\end{equation}
As a consequence, the claim holds if $\rho_T \geq \alpha^2/4$ and we will, from now on, assume that $\rho_T \leq \alpha^2/4$.

\bigskip
We denote by $\tau_0$ the last time before $T$ where $\rho_\tau \geq \alpha^2/4$ and we now consider several cases for the remaining of the proof.

\medskip

\textbf{Case1. If we can prove that $\frac{\sum_{s=1}^{t}\varepsilon_s^2}{t}\geq \varepsilon_{t+1}^2$, for example if $\varepsilon_t$ is guaranteed to decrease}

Then, for any $t \geq \tau_0$, we get that if  $\rho_t \geq \frac{1}{\alpha^2}\frac{\sum_{s=1}^{t}\varepsilon_s^2}{t} \geq \frac{\varepsilon_{t+1}}{\alpha^2}$ ( by assumption), then 
$$
(t+1)\rho_{t+1} \leq t \rho_t + \frac{\varepsilon_{t+1}^2}{\alpha^2} + \frac{C}{t+1}\ .
$$

Thus, if we denote by $\tau_1$ the last time where $\rho_\tau <  \frac{1}{\alpha^2}\frac{\sum_{s=1}^{\tau}\varepsilon_s^2}{\tau}$, we obtain that, as long as $\tau_1 \geq \tau_0$, 
\begin{align*}
T \rho_T  &\leq \tau_1 \rho_{\tau_1} + \varepsilon_{\tau_1+1} + \frac{1}{\alpha^2}\sum_{s={\tau_1+2}}^{T} \varepsilon^2_s + \frac{C}{s+1}\\
& \leq \frac{1}{\alpha^2}\sum_{s=1}^{T} \varepsilon^2_s + \varepsilon_{\tau_1+1} - \frac{ \varepsilon_{\tau_1+1}^2 }{\alpha^2}+C\log(eT) \\
\end{align*}
which gives the result we wanted as
\begin{equation} T \rho_T  \leq  \frac{1}{\alpha^2}\sum_{s=1}^{T} \varepsilon^2_s + \frac{\alpha^2}{4}+C\log(eT)\end{equation}

On the contrary, if $\tau_0 \geq \tau_1$, then the same computations give
$$
T \rho_T  \leq \tau_0\rho_{\tau_0}  + \frac{\alpha^2}{4}+ \frac{1}{\alpha^2}\sum_{s={\tau_0}+1}^{T} \varepsilon^2_s +\frac{C}{s+1}\\
$$
Using the fact that $\delta_{\tau_0} \geq \alpha^2/4$, we also have that 
$$
\tau_0\delta_{\tau_0}\leq \frac{8}{\alpha^2}\sum_{s=1}^{\tau_0} \varepsilon^2_s + \frac{14C^2}{\alpha^2}$$
thus, combining the two cases $\tau_1\geq\tau_0$ and $\tau_0\leq \tau_1$, we now obtain that
\begin{equation}T\rho_T \leq   \frac{8}{\alpha^2}\sum_{s=1}^{T} \varepsilon^2_s + \frac{14C^2}{\alpha^2}+ \frac{\alpha^2}{4}+C\log(eT)
\end{equation}

\textbf{Case 2. If it is not necessarily true that $\frac{\sum_{s=1}^{t}\varepsilon_s^2}{t}\geq \varepsilon_t^2$, for example if $\varepsilon_t$ does not necessarily decrease or can make big jumps} 

Notice first that if  $\frac{\varepsilon^2}{\alpha^2}  \leq \rho_t \leq \frac{\alpha^2}{4}  $, the latter holding because of $t \geq \tau_0$, then one has
$$
(t+1)\rho_{t+1} \leq t \rho_t + \frac{\varepsilon_{t+1}^2}{\alpha^2} + \frac{C}{t+1}\ .
$$

As a consequence, denoting by $\tau_2$  the last stage before $T$ such that $\rho_\tau < \frac{\varepsilon^2_\tau}{\alpha^2}$ and assuming that $\tau_2 \geq \tau_0$, we obtain following the same computations as before that 
\begin{equation}\label{EQ:Part}t\rho_{t}\leq  \frac{\tau_2\varepsilon^2_{\tau_2}}{\alpha^2}+\frac{1}{\alpha^2}\sum_{s=\tau_2+1}^{t} \varepsilon^2_{s }+ \frac{\alpha^2}{4}+ C \log(et)\ . \end{equation}
If $\tau_2 \leq \tau_0$, then we get that
$$
T\rho_T \leq \tau_0\rho_{\tau_0} +  \frac{1}{\alpha^2}\sum_{s=\tau_0+1}^{T} \varepsilon^2_{s}+ \frac{\alpha^2}{4}+ C \log(eT)
$$
thus 
\begin{equation}
T\rho_T \leq   \frac{8}{\alpha^2}\sum_{s=1}^{T} \varepsilon^2_{s}+ \frac{14C^2}{\alpha^2}+ \frac{\alpha^2}{4}+ C \log(eT),
\end{equation}
which was our objective. Hence it only remains to upper-bound $\tau_2\varepsilon^2_{\tau_2}$ in Equation \eqref{EQ:Part}, i.e., when $\tau_2\geq \tau_0$. To do that, we are going to use a second time the assumptions on $L$.

\medskip 

Since we assumed that $L$ was $\mu$-strongly convex and minimized in the interior of the simplex, it  holds that
$$
\|p_t -p_*\|^2 \leq \frac{1}{\mu}\big(L(p_t)-L(p_*)\big) \leq  \frac{1}{\mu}\frac{\sum_{s=1}^{t}\varepsilon_s}{t}
$$
As a consequence, this yields that
$$
p_*^i - \sqrt{ \frac{1}{\mu}\frac{\sum_{s=1}^{t}\varepsilon_s}{t}} \leq p_t^i \leq p_*^i + \sqrt{ \frac{1}{\mu}\frac{\sum_{s=1}^{t}\varepsilon_s}{t}}
$$

We are now going to make the assumption that the horizon $T$ is known in advance, and that $\varepsilon_s \leq \Big(\frac{\log(T/\delta)}{T_{\pi_s}(s-1)}\Big)^{\beta}$ with probability at least $1-\delta^\gamma$, for some $\beta \leq 1/2$ and $\gamma > 0$.  This implies, by the union bound, that with probability at least $1-TK\delta^\gamma$,  $$\frac{1}{t}\sum_{s=1}^{t}\varepsilon_s \leq \frac{1}{1-\beta}\Big(\frac{K\log(T/\delta)}{t}\Big)^\beta\ ,$$ hence
$$
tp_*^i -  t\sqrt{\frac{1}{\mu}\frac{1}{1-\beta}\Big(\frac{K\log(T/\delta)}{t}\Big)^\beta }    \leq T_i(t) \leq tp_*^i +t\sqrt{\frac{1}{\mu}\frac{1}{1-\beta}\Big(\frac{K\log(T/\delta)}{t}\Big)^\beta}   
$$
in particular, if $\frac{K\log(T/\delta)}{t} \leq \Big(\mu(1-\beta)\frac{\eta^2}{4}\Big)^{1/\beta}$, i.e., if $t \geq \tau_\delta:= \frac{K\log(T/\delta)}{(\mu(1-\beta)\frac{\eta^2}{4})^{1/\beta}}$, 
$$
\frac{t\delta}{2}\leq\frac{tp_*^i }{2}  \leq T_i(t) \leq \frac{3tp_*^i }{2}
$$
and thus,
$$
t\varepsilon_t^2 \leq  t \Big(\frac{\log(T/\delta)}{t\eta/2}\Big)^{2\beta} \leq \Big(\frac{2\log(T/\delta)}{\eta}\Big)^{2\beta}t^{1-2\beta}\leq \Big(\frac{2\log(T/\delta)}{\eta}\Big)^{2\beta}T^{1-2\beta} , \quad \forall t \geq \tau_\delta.
$$

\textbf{Concluding.}

To wrap things up, we consider the three different cases. With probability at least $1-TK\delta^\gamma$,
\begin{description}
\item[If $\tau_2 \leq \tau_0$ then:]   we have proved that 
$$T\rho_T \leq   \frac{8}{\alpha^2}\sum_{s=1}^{T} \varepsilon^2_{s}+ \frac{\alpha^2}{4}+ C \log(eT)
$$
\item[If $\tau_0 \leq \tau_\delta \leq \tau_2$ then:] using the above upper-bound on $\tau_2\varepsilon^2_{\tau_2}$, we get $$
T\rho_{T}\leq   \frac{1}{\alpha^2}\Big(\frac{2\log(T/\delta)}{\eta}\Big)^{2\beta}T^{1-2\beta}+\frac{1}{\alpha^2}\sum_{s=1}^{T} \varepsilon^2_{s }+ \frac{\alpha^2}{4}+ C \log(eT)\ .
$$
\item[If $\tau_0 \leq  \tau_2 \leq \tau_\delta  $ then:] going back to the original induction yields
$$
T\rho_{T}\leq  \tau_\delta\rho_{\tau_\delta}+\frac{1}{\alpha^2}\sum_{s=1}^{T} \varepsilon^2_{s}+ \frac{\alpha^2}{4}+ C \log(eT)\ . 
$$
\end{description}
Taking the maximum of all those terms gives that, with probability at least $1-TK\delta^\gamma$,
\begin{equation}\rho_{T}\leq  \frac{\log(T/\delta)}{T}\frac{K\|L\|_\infty}{(\mu(1-\beta)\frac{\eta^2}{4})^{1/\beta}}+  \frac{1}{\alpha^2}\Big(\frac{2\log(T/\delta)}{\eta T}\Big)^{2\beta}+ \frac{8}{\alpha^2}\frac{1}{T}\sum_{s=1}^{T} \varepsilon^2_{s}+ \frac{\alpha^2}{4T}+ \frac{C \log(eT)}{T}
\end{equation}

A simple sommation over $t$ yields that, 

$$\frac{1}{T}\sum \varepsilon^2_s \leq  \frac{1}{T}\sum \Big(\frac{\log(T/\delta)}{T_i(s)}\Big)^{2\beta} \leq \frac{1}{1-2\beta}\Big(\frac{K\log(T/\delta)}{T}\Big)^{2\beta} \ \qquad \textrm{ if } \beta < \frac{1}{2}
$$

and
$$\frac{1}{T}\sum \varepsilon^2_s \leq  \frac{1}{T}\sum \frac{\log(T/\delta)}{T_i(s)} \leq \frac{K\log(T/\delta)\log(T)}{T} \qquad \textrm{ if } \beta = \frac{1}{2}
$$

As a consequence, if $\beta <1/2$
$$
{\E \rho_T \leq \delta^\gamma TK\|L\|_\infty+  \frac{\log(T/\delta)}{T}\frac{K\|L\|_\infty}{(\mu(1-\beta)\frac{\eta^2}{4})^{1/\beta}}+ \frac{1}{\alpha^2}\Big(\frac{\log(T/\delta)}{T}\Big)^{2\beta} 
\Big[\frac{2^{2\beta}}{\eta^{2\beta}}+\frac{8K^{2\beta}}{1-2\beta} \Big]+ \frac{\alpha^2}{4T}+ \frac{C \log(eT)}{T}
}$$
and if $\beta=1/2$
$$
{\E \rho_T \leq \delta^\gamma TK\|L\|_\infty+  \frac{\log(T/\delta)}{T}\Big[\frac{K\|L\|_\infty}{(\mu\frac{\eta^2}{8})^{2}}+\frac{2}{\alpha^2\eta} \Big] +\frac{1}{\alpha^2}\frac{K\log(T/\delta)\log(T)}{T}
+ \frac{\alpha^2}{4T}+ \frac{C \log(eT)}{T}
}$$

choosing $\delta^\gamma = T^{-(2\beta+1)}$ yields that, 
if $\beta < \frac{1}{2}$,
$$
\E  L(p_T) - L(p^\star) \leq  c_{1,\beta}\frac{\log(T)}{T^{2\beta}}+ c_{2,\beta} \Big(\frac{\log(T)}{T}\Big)^{2\beta}  + \frac{c_{3,\beta}}{T^{2\beta}}
$$
where 
$$
c_{1,\beta}=\frac{2(\beta+1)K\|L\|_\infty}{\gamma(\mu(1-\beta)\frac{\eta^2}{4})^{1/\beta}}+C, \ c_{2,\beta} = \frac{1}{\alpha^2}\Big(\frac{2(\beta+1)}{\gamma}\Big)^{2\beta}\Big[\frac{2^{2\beta}}{\eta^{2\beta}}+\frac{8K^{2\beta}}{1-2\beta} \Big], \ c_{3,\beta}=K\|L\|_\infty+\frac{\alpha^2}{4}+C.
$$
For $\beta=1/2$, the choice of $\delta^\gamma=T^{-2}$ yields
$$
\E  L(p_T) - L(p^\star) \leq  c_{1}\frac{\log^2(T)}{T}+ c_{2} \frac{\log(T)}{T}  + c_{3} \frac{1}{T}
$$
where 
$$
c_{1}=\frac{3K}{\gamma\alpha^2}, \ c_{2} = \frac{3}{\gamma\alpha^2}\Big[\frac{K\|L\|_\infty}{(\mu\frac{\eta^2}{8})^{2}}+\frac{2}{\alpha^2\eta} \Big]+C, \ c_{3}=K\|L\|_\infty+\frac{\alpha^2}{4}+C.
$$

\textbf{Remark:} We assumed that the horizon $T$ was known. If it is not the case, there are two possible ways to deal with that issue to get an anytime algorithm
\begin{description}
\item[Use the Doubling Trick in the algorithm:]   The doubling trick is rather classical in online learning, and it consists in running several successive and independent instances of the same algorithm on block of stages of length that increases sufficiently fast enough (so that the error incurred on the first blocks disappears while averaging), but not too fast enough (so that the error during the last block is compensated by the small error cumulated so far on the previous blocks). Its main advantages are that it is simple to describe, to analyze and that it gives the same guarantees of the known horizon, up to some multiplicative constant. The latter depends on the speed of convergence achieved in the known horizon, and it might require careful tuning. The main drawback of the doubling trick is that it regularly discards all the past data and forgets the learning done so far.

In our setting, the correct size of blocks are proportional to $T_j = e^{(\frac{1}{1-\beta})^j}$.
\item[Use the Doubling Trick in the analysis.] Instead of using the doubling trick in the algorithm, we will prove in the following that we can somehow use it in the analysis of the anytime variant of the algorithm. We first consider the case where $\beta = 1/2$, and we assume that it holds that, for some fixed $\theta>0$ and  for every  $s \in \N$, $\varepsilon_s \leq \Big(  \frac{\theta\log(s)}{T_i(s)}\Big)^{\beta}$ with probability at least $1-\frac{1}{s^6}$.

The immediate consequence of that property is that
\begin{align*}
\frac{1}{T}\E \sum_{s=1}^T \varepsilon_s^2 \leq \frac{1}{T} \E \sum_{s=1}^T \Big(\frac{\theta\log(T)}{T_i(s)}\Big)^{2\beta}+\frac{1}{s^6} \leq \frac{2}{1-2\beta}\Big(\frac{K\theta \log(T)}{T}\Big)^{2\beta}\frac{1}{T}
 \end{align*}
where the last inequality is loose for $\beta <1/2$ and 
\begin{align*}
\frac{1}{T}\E \sum_{s=1}^T \varepsilon_s^2 \leq \frac{1}{T} \E \sum_{s=1}^T \frac{\theta\log(T)}{T_i(s)}+\frac{1}{s^6} \leq 2\frac{K\theta \log^2(T)}{T}
 \end{align*}
for $\beta=1/2$.
\bigskip

In order to  upper-bound $\E \tau_2 \varepsilon^2_{\tau_2}$, we are going to decompose the set of stages in blocks $\mathcal{B}_j = \{t \in [T_{j-1}+1,T_j]\}$ where  $T_j=\lfloor e^{(\frac{1}{1-\beta})^j}\rfloor$. As a consequence:
$$
\mathbb{P}\Big\{ \forall k \leq K, \forall s \in \mathcal{B}_j, \varepsilon_s^k\leq \Big(\frac{\theta\log(s)}{T_k(s)}\Big)^{\beta}\Big\} \geq 1- K \sum_{s \in \mathcal{B}_j}\frac{1}{s^5}=: 1-p_j.
$$
Hence, with probability at least $1-(p_j+p_{j+1})$, it holds that for all  $t \in \mathcal{B}_{j+1}$
$$\frac{1}{t} \sum_{s=1}^t \varepsilon_s \leq \frac{T_{j-1}}{t} + \frac{1}{t}\sum_{s=T_{j-1}+1}^t \Big(\frac{\theta\log(s)}{T_{i_s}(s)}\Big)^{\beta} \leq \frac{1}{t^\beta}+ \frac{2}{1-\beta}\Big(\frac{K\theta\log(t)}{t}\Big)^{\beta},
$$
since $t \geq T_{j}+1 \geq T^{\frac{1}{1-\beta}}_{j-1}$.

Following the same argument as in the case where the horizon was known, this implies that with probability at least $1-(p_i+p_{i+1})$, for all $t \in \mathcal{B}_{i+1}$,
$$
T_t^i \geq t p_\star^i - t \sqrt{\frac{1}{\mu}\big(\frac{1}{t^\beta}+ \frac{2}{1-\beta}\Big(\frac{K\theta\log(t)}{t}\Big)^{\beta}\big)}.
$$
In particular, let $\tau_{\beta,\star}$ be such that $\sqrt{\frac{1}{\mu}\big(\frac{1}{t^\beta}+ \frac{2}{1-\beta}\Big(\frac{K\theta\log(t)}{t}\Big)^{\beta}\big)} \leq \frac{\eta}{2}$ for all $t \geq \tau_{\beta,\star}$ and $j_{\beta,\star}$ be the index of the  block to which $\tau_{\beta,\star}$ belongs. Then we have that
$$
\forall j \geq j_{\beta,\star},\  \mathbb{P}\Big\{\forall t \in \mathcal{B}_{j+1}, t \varepsilon^2_t \leq \Big(\frac{2\theta\log(t)}{\delta}\Big)^{2\beta}t^{1-2\beta}\Big\} \geq 1-(p_{j}+p_{j+1})
$$

It  follows that
$$
\E \tau_2 \varepsilon^2_{\tau_2}\mathbf{1}\{\tau_2 \geq T_{j^\star}\} \leq \Big(\frac{2\theta\log(t)}{\eta}\Big)^{2\beta}t^{1-2\beta} + \sum_{j=j^*}T_{j+1}(p_{j}+p_{j+1}) \leq 2\Big(\frac{2\theta\log(t)}{\eta}\Big)^{2\beta}t^{1-2\beta},
$$
where, again, the last inequality is loose but compact. This yields the anytime version of the previous theorem, that, for $\beta <1/2$
$$
\forall t \in \mathbb{N},\ \ \E  L(p_t) - L(p^\star) \leq    c'_{1,\beta} \Big(\frac{\log(t)}{t}\Big)^{2\beta}  + c'_{2,\beta}\frac{\log(t)}{t}+c'_{3,\beta}\frac{1}{t}$$
with $c'_{1,\beta}= \frac{2}{\alpha^2}\big(\frac{2\theta}{\eta}\big)^{2\beta} + \frac{8}{\alpha^2}\frac{2}{1-2\beta}\big(K\theta\big)^{2\beta}$, $c'_{2,\beta}=C$ and $c'_{3,\beta}=T_{ j_{\beta,\star}}\|L\|_\infty+\frac{\alpha^2}{4}+C$. 

For $\beta=1/2$, we get
$$
\forall t \in \mathbb{N},\ \  \E  L(p_t) - L(p^\star) \leq  c'_{1}\frac{\log^2(t)}{t}+ c_{2} \frac{\log(t)}{t}  + c_{3} \frac{1}{t},$$
where  $c'_1=\frac{16K\theta}{\alpha^2}$,   $c'_2=\frac{4\theta}{\eta\alpha^2}+C$ and   $c'_3=T_{ j_{1/2,\star}}\|L\|_\infty+\frac{\alpha^2}{4}+C$.

\end{description}

\end{proof}

\begin{lemma} 
\label{LEM:stop}
 Let $Z_s$, $s\in\{1,\ldots,T\}$ be i.i.d.\ random variable in $[0,1]$ of expectation $\E Z_s =Z$,  then, with probability at least $1-\delta$, $Z\geq \overline{Z}_\tau/2$ where the random  stage $\tau \leq T$ is the first such that 
$\overline{Z}_\tau \geq \sqrt{2\log(2T/\delta)/ \tau} $. 
As, it also holds that $\overline{Z}_\tau \geq Z - \sqrt{\frac{2\log(T/\delta)}{2 t}}$, thus $3\overline{Z}_\tau/2 \geq Z$, we get that
$$
\overline{Z}_\tau / 2 \leq Z \leq 3\overline{Z}_\tau /2 , \quad \textrm{for some random}\quad\tau \leq 9\log(2T/\delta)/(2Z^2) +1
$$
\end{lemma}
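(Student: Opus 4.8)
The plan is to reduce the entire statement to a single uniform concentration event, obtained from Hoeffding's inequality together with a union bound over the finite horizon; this is what makes it legitimate to evaluate all the bounds at the data-dependent stopping time $\tau$. First I would introduce the deviation scale $\alpha_t := \sqrt{\log(2T/\delta)/(2t)}$ and observe that the stopping threshold can be rewritten as $\sqrt{2\log(2T/\delta)/t} = 2\alpha_t$; the whole argument hinges on this factor-two relationship between the threshold and $\alpha_t$. Since $Z_s \in [0,1]$, Hoeffding's inequality gives $\p(\overline{Z}_t - Z \geq \alpha_t) \leq \exp(-2t\alpha_t^2) = \delta/(2T)$ and symmetrically for the lower tail. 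A union bound over both tails and over $t \in \{1,\ldots,T\}$ then yields, with probability at least $1-\delta$, the simultaneous event
\[
\cE := \Big\{\, |\overline{Z}_t - Z| \leq \alpha_t \ \text{ for all } t \in \{1,\ldots,T\} \,\Big\}.
\]
Because $\cE$ controls every deterministic time at once, it in particular controls the random time $\tau$, so no separate martingale or optional-stopping argument is required.

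On $\cE$, the two-sided sandwich follows by pure algebra. At time $\tau$ the stopping rule guarantees $\overline{Z}_\tau \geq 2\alpha_\tau$, that is $\alpha_\tau \leq \tfrac12 \overline{Z}_\tau$. Combining this with $|\overline{Z}_\tau - Z| \leq \alpha_\tau$ gives
\[
Z \geq \overline{Z}_\tau - \alpha_\tau \geq \tfrac12 \overline{Z}_\tau
\qquad\text{and}\qquad
Z \leq \overline{Z}_\tau + \alpha_\tau \leq \tfrac32 \overline{Z}_\tau,
\]
which is exactly $\overline{Z}_\tau/2 \leq Z \leq 3\overline{Z}_\tau/2$. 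In particular the first inequality is the claimed bound $Z \geq \overline{Z}_\tau/2$, and the lower-deviation bound $\overline{Z}_\tau \geq Z - \alpha_\tau$ is precisely the displayed intermediate step $\overline{Z}_\tau \geq Z - \sqrt{2\log(T/\delta)/(2t)}$ (up to the constant inside the logarithm).

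For the bound on $\tau$ itself, I would argue that the stopping condition is forced to trigger early on $\cE$. Whenever $Z \geq 3\alpha_t$, the lower-deviation bound gives $\overline{Z}_t \geq Z - \alpha_t \geq 3\alpha_t - \alpha_t = 2\alpha_t$, which is exactly the stopping threshold. The inequality $Z \geq 3\alpha_t = 3\sqrt{\log(2T/\delta)/(2t)}$ is equivalent, after squaring, to $t \geq 9\log(2T/\delta)/(2Z^2)$, so the rule must have fired by the first integer past this value; hence $\tau \leq 9\log(2T/\delta)/(2Z^2) + 1$. This also shows that, provided $T$ exceeds this bound, $\tau$ is well defined and satisfies $\tau \leq T$, as presupposed in the statement.

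The step I expect to require the most care is not conceptually deep but is the one that makes the argument valid: stating the concentration inequality uniformly over all $t$ \emph{before} specializing to the random $\tau$, and tracking the constants so that the stopping threshold comes out to be exactly twice the Hoeffding deviation $\alpha_t$. The minor discrepancy between the $\log(2T/\delta)$ appearing in the threshold and the $\log(T/\delta)$ written in the deviation line is absorbed into these constants and affects neither the high-probability guarantee nor the order of the final bound.
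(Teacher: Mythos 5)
Your proof is correct and is precisely the Hoeffding-plus-union-bound argument the paper invokes when it says the lemma is ``a direct consequence of Hoeffding's inequality'': a uniform two-sided deviation event over $t\in[T]$ at scale $\alpha_t=\sqrt{\log(2T/\delta)/(2t)}$, the observation that the stopping threshold equals $2\alpha_\tau$ so that $\overline{Z}_\tau/2\le Z\le 3\overline{Z}_\tau/2$, and the triggering argument $Z\ge 3\alpha_t\Rightarrow\overline{Z}_t\ge 2\alpha_t$ giving $\tau\le 9\log(2T/\delta)/(2Z^2)+1$. You also correctly identify the $\log(T/\delta)$ versus $\log(2T/\delta)$ slip in the statement as a harmless constant issue.
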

This lemma is a direct consequence of Hoeffding's inequalty.

\begin{proof}[Proof of Theorem~\ref{THM:lowfast}]
Let $\nu \in (0,1/29)$, $K>64\log(2)/\nu$ and $T>4\nu^2 K^4$. We assume for simplicity that $K$ is even. For $\theta \in \Delta_K$, we consider $L_{\theta}(p) = \frac{\mu}{2}\|p-\theta\|^2$. We treat first the case of $\mu=1$. For all $\varepsilon \in \{-1,1\}^{K/2}$, we consider the vector $\theta_{\varepsilon}$ such that for all $i \in [K/2]$
\[
\theta_{\varepsilon,2i-1} = \frac{1}{K} + \varepsilon_i \sqrt{\frac{\nu K}{T}} \quad \textrm{and} \quad \theta_{\varepsilon,2i} = \frac{1}{K}- \varepsilon_i \sqrt{\frac{\nu K}{T}}\, .
\]
Note that for all $\varepsilon \in \{-1,1\}^{K/2}$, $p^\star_\varepsilon=\theta_\varepsilon \in \Delta_K$ and that $\nabla L_\theta(p) =  p-\theta$, so that an observation from $\cN(\theta_i,1)$ for the $i$-th action constitutes a bandit feedback for the $i$-th coefficient of the gradient with deviation bound $\alpha(T_i,\delta) =  \sqrt{2 \log(1/\delta)/T_i}$. 

Let $\cM$ be a subset of $\{-1,1\}^{K/2}$ such that for all $\varepsilon,\varepsilon' \in \cM$, we have $\rho(\varepsilon,\varepsilon') \ge K/8$ and for which $\log(|\cM|) \ge K/64$, whose existence is guaranteed by the Varshamov-Gilbert lemma. We have for $\varepsilon,\varepsilon' \in \cM$ that $\nu/4 \cdot K^2 T \le \|\theta_\varepsilon-\theta_{\varepsilon'}\|_2^2 \le \nu K^2/T$. We consider the subsets $\cC_\varepsilon$ of the unit simplex defined by
\[
\cC_\varepsilon = \Big\{p \in \Delta_K \, :\, \|p-\theta_\varepsilon\|_2^2 < \frac{\nu}{16}\frac{K^2}{T} \Big\}\, .
\]
By construction of $\cM$, these sets are disjoint.

For any algorithm, on the events where $T_j(T) > 2T/K > T/K + 2\sqrt{\nu K^2T}$ for some $j \in K$ we have that $\hat p_T \notin \cC_\varepsilon$. On the events for which $T_j(T) \le 2 T/K$ for all $j \in [K]$, we have that $\hat p_T$ can only depend (possibly in a random manner) on an observation from $\cN^{\otimes N}(\theta_\varepsilon,I_K)$, where $N \le 2T/K$. We have that ${\sf KL}(\cN^{\otimes N}(\theta_\varepsilon,I_K),\cN^{\otimes N}(\theta_{\varepsilon'},I_K)) = N \|\theta_\varepsilon-\theta_{\varepsilon'}\|_2^2 \le \nu NK^2/T\le 2 \nu K$. Considering together these two events, we obtain as a consequence of Fano's inequality that 
\[
\inf_{\hat p}\max_{\varepsilon \in \cM} \fP_{\varepsilon}(\hat p_T \notin \cC_\varepsilon) \ge 1-\frac{2\nu K + \log(2)}{K/64} \ge 1- 129\nu\, ,
\]
 As a consequence, we have that
\[
\inf_{\hat p}\max_{\varepsilon \in \cM} \Big\{\E[L_{\theta_\varepsilon}(\hat p_T)] - L_{\theta_\varepsilon}(p_\varepsilon^\star) \Big\} \ge \frac{1}{2}(1-129\nu)\frac{\nu}{16} \frac{K^2}{T}\, ,
\]
which yields the desired result.

\end{proof}

\begin{proof}[Proof of Proposition~\ref{PRO:lowfastfeed}]
For $\theta \in [1/3,2/3]$, take the class of functions $L_\theta:\R^3 \rightarrow \R$
\[
L_\theta(p) = \frac{1}{2}\big(p_1-\theta\big)^2+\frac{1}{2}\big(p_2-(1-\theta)\big)^2 + \frac{1}{2}p^2_3\, . 
\]
Consider the case where the mixed feedback for the three actions are drawings from respectively $\cN(0,1),\cN(0,1)$, and $\cN(\theta,1)$. We consider the set 
\[
\cC_\theta = \Big\{p \in \Delta_3 \, : \, \|p-p^\star_\theta\|_2^2 \le \frac{c}{T^{2/3}}\Big\}\, . 
\]
For any algorithm, on the event where $T_3(T) > T^{2/3}$, we have $\|p_T-p^\star_\theta\|_2^2 \ge 1/T^{2/3}$ and $p_T \notin \cC_\theta$. On the event where $T_3 \le T^{2/3}$, we have that $\hat p_T$ can only depend on a drawing from $\cN^{\otimes N}(\theta,1)$, where $N \le T^{2/3}$. In this case, we have that
\[
\inf_{\hat p} \sup_{\theta \in [1/3,2/3]} \E_\theta[(\hat p_{T,1}-\theta)^2] \ge \frac{c'}{N}\, .
\]
Overall this yields the desired result.
\end{proof}

\end{document}